\DeclareRobustCommand\onedot{\futurelet\@let@token\@onedot}
\def\@onedot{\ifx\@let@token.\else.\null\fi\xspace}
\def\eg{\emph{e.g}\onedot} 
\def\ie{\emph{i.e}\onedot}
\def\etal{\emph{et al}\onedot}
\def\Vec#1{{\boldsymbol{#1}}}
\def\Mat#1{{\boldsymbol{#1}}}
\def\SPD#1{\mathcal{S}_{++}^{#1}}
\def\SYM#1{\mathcal{S}^{#1}}
\def\GRASS#1#2{\mathcal{G}({#1},{#2})}
\newcommand{\tr}{\mathop{\rm  Tr}\nolimits}
\newcommand{\DIAG}{\mbox{Diag\@\xspace}}
\begin{document}
\title{Dimensionality Reduction on SPD Manifolds: The Emergence of Geometry-Aware Methods}

\author
  {
  \name Mehrtash Harandi \email {mehrtash.harandi@anu.edu.au} \\
  \addr Australian National University, Canberra, Australia. \\ 
  \name Mathieu Salzmann \email {mathieu.salzmann@epfl.ch} \\
  \addr CVLab, \'{E}cole Polytechnique F\'{e}d\'{e}rale de Lausanne (EPFL), Switzerland. \\
  \name Richard~Hartley \email {richard.hartley@anu.edu.au} \\
  \addr Australian National University, Canberra, Australia. \\ 
  }

\maketitle
\begin{abstract}
Representing images and videos with Symmetric Positive Definite (SPD) matrices, and considering the Riemannian geometry of the resulting space, has been shown to yield high discriminative power in many visual recognition tasks. Unfortunately, computation on the Riemannian manifold of SPD matrices --especially of high-dimensional ones-- comes at a high cost that limits the applicability of existing techniques.  
In this paper, we introduce algorithms able to handle high-dimensional SPD matrices by constructing a lower-dimensional SPD manifold. To this end, we propose to model the mapping from the high-dimensional SPD manifold to the low-dimensional one
with an orthonormal projection. This lets us formulate dimensionality reduction as the problem of finding a projection that yields a low-dimensional manifold either with maximum discriminative power in the supervised scenario, or with maximum variance of the data in the unsupervised one. 
We show that learning can be expressed as an optimization problem on a Grassmann manifold and discuss fast solutions for special cases. 
Our evaluation on several classification tasks evidences that our approach leads to a significant accuracy gain over state-of-the-art methods.
\end{abstract}

\section{Introduction}
\label{sec:introduction}

Dimensionality Reduction (DR) is imperative in various disciplines of computer science, including 
machine learning and computer vision. Conventional methods, such as Principal Component Analysis (PCA) and Linear Discriminant Analysis (LDA), 
are specifically designed to work with real-valued vectors coming from a flat Euclidean space. In modern computer vision, however, the data and the mathematical models often naturally lie on Riemannian manifolds (\eg, subspaces form Grassmannian, 2D shapes lie on Kendall shape spaces~\cite{Kendall1984}). There has therefore been a growing need and interest to go beyond the extensively studied Euclidean spaces and analyze non-linear and curved Riemannian manifolds. In this context, a natural question arises: \emph{How can popular DR techniques be extended to curved Riemannian spaces?} A principled answer to this question will open the door to exploiting higher-dimensional, more discriminative features, and thus to improved accuracies in a wide range of applications involving classification and clustering.

This paper tackles the problem of dimensionality reduction on the space of Symmetric Positive Definite (SPD) matrices, \ie, the SPD manifold. In computer vision, SPD matrices have been successfully employed for a variety of tasks, such as analyzing medical imaging~\cite{Pennec_IJCV_2006}, segmenting images~\cite{Carreira_TPAMI_15} 
and recognizing textures~\cite{Tuzel_ECCV_2006,Harandi_TNNLS_2015}, pedestrians~\cite{Tuzel_PAMI_2008,Tosato_TPAMI_2013,Sadeep_CVPR_2013}, 
faces~\cite{Pang_TCSVT_2008,Wang_CVPR_2012_CDL,Sivalingam_TPAMI_2014}, and actions~\cite{Sanin_WACV_2013,Guo_TIP13}.

The set of SPD matrices is clearly not a vector space as it is not closed under addition and scalar product (\eg, multiplying a positive definite matrix with a negative scalar makes it negative definite). As such, analyzing SPD matrices through the geometry of Euclidean spaces, such as using the Frobenius inner product as a mean of measuring similarity, is not only unnatural, but also inadequate. This inadequacy has recently been demonstrated in computer vision by a large body of work, \eg,~\cite{Pennec_IJCV_2006,Tuzel_PAMI_2008,Sadeep_CVPR_2013}.
One striking example is the \textit{swelling effect} that occurs in diffusion tensor imaging (DTI), where a matrix represents the covariance of the local Brownian motion of water molecules~\cite{Pennec_IJCV_2006}-- when considering Euclidean geometry to interpolate between two diffusion tensors, the determinant of the intermediate matrices may become strictly larger than the determinants of both original matrices, which, from a physics point of view, is unacceptable.

A popular and geometric way to analyze SPD matrices is through the Riemannian structure induced by the Affine Invariant 
Riemannian Metric (AIRM)~\cite{Pennec_IJCV_2006}, which is usually referred to as SPD manifold. 
The geodesic distance induced by AIRM is related to the distance induced by the Fisher-Rao metric on the manifold of multivariate 
Gaussian distributions with fixed means (see for example~\cite{Atkinson1981}). It enjoys several properties, such as invariance to affine transformations, which are of particular interest in computer vision.

While the Riemannian structure induced by AIRM has been shown to overcome the limitations of Euclidean geometry to a great extent, the computational cost of the resulting techniques increases drastically with the dimension of the manifold (\ie, the size of the SPD matrices). As a consequence, with the exception of a few works that handle medium-sized features~\cite{Carreira_TPAMI_15,Wang_CVPR_2012_CDL}, previous studies have opted for low-dimensional SPD matrices (\eg, region covariance descriptors obtained from low-dimensional features). Clearly, and as evidenced by the recent feature-learning trends in computer vision, low-dimensional features are bound to be less powerful and discriminative. In other words, to match or even outperform state-of-the-art recognition systems on complex tasks, manifold-based methods will need to exploit high-dimensional SPD matrices. This paper introduces techniques to perform supervised and unsupervised DR methods dedicated to SPD manifolds, as illustrated by Fig.~\ref{fig:intro}.

\begin{figure}[!t]
\centering
	\includegraphics[width= 0.7\columnwidth,keepaspectratio]{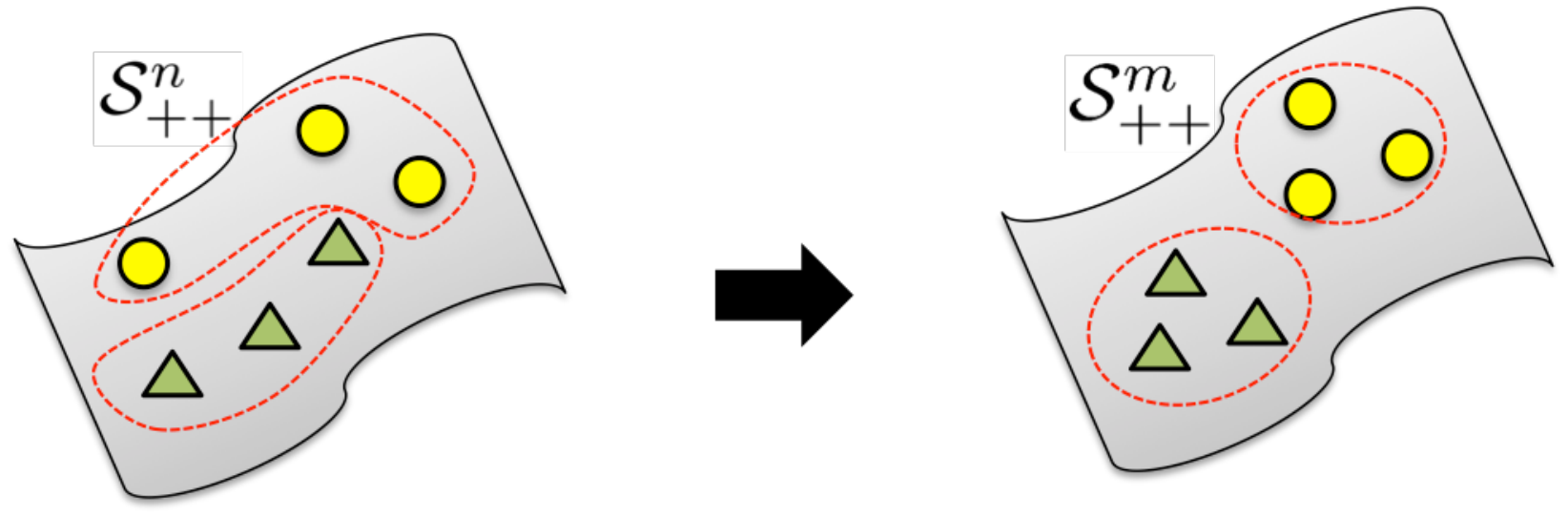}
	\caption{\small \textbf{Dimensionality Reduction on SPD Manifolds:}  
	Given data on a high-dimensional SPD manifold, where each sample represents an $n\times n$ SPD matrix, we learn a mapping 
	to a lower-dimensional SPD manifold. We consider both the supervised scenario, illustrated here, where the resulting $m\times m$ SPD matrices are clustered according to class labels, and the unsupervised one, where the resulting matrices have maximum variance.
    }
\label{fig:intro}
\end{figure}

More specifically, in the supervised scenario, we introduce an approach that constructs a lower-dimensional and more discriminative SPD manifold from a high-dimensional one. To this end, we encode the notion of discriminative power by pulling together the training samples from the same class while pushing apart those from different classes. We study three variants of this approach, where the distance is defined by either the AIRM, the Stein divergence~\cite{Sra_NIPS_2012}, or the Jeffrey divergence~\cite{Vemuri_CVPR_2004}. In particular, the latter two divergences are motivated by the fact that they share invariance properties with the AIRM while being faster to compute.

In the unsupervised scenario, we draw inspiration from the Maximum Variance 
Unfolding (MVU) algorithm~\cite{Weinberger_IJCV_2006}. That is, we introduce a method that maps a high-dimensional SPD manifold 
to a low-dimensional one, where the training matrices are furthest apart from their mean. As in the supervised case, we study three variants, that rely on the AIRM, the Stein divergence and the Jeffrey divergence, respectively~\cite{Cherian:PAMI:2013}.

We demonstrate the benefits of our approach on several classification and clustering tasks where 
the data can be represented with high-dimensional SPD matrices. In particular, our method outperforms state-of-the-art techniques on image-based material categorization and face recognition, and action recognition from 3D motion capture sequences. A Matlab implementation of our algorithms is available from the first author's webpage\footnote{%
This paper is an extended and revised version of
our earlier work~\cite{Harandi_ECCV_2014}. In addition to providing more insight on the
proposed methods, we extend our initial work by introducing an unsupervised DR algorithm and
deriving variants of our unsupervised/supervised DR methods based on the Jeffrey divergence. 
}.

\section{Background Theory}
\label{sec:background}

This section provides an brief review of the Riemannian geometry of SPD manifolds, as well as of Bregman divergences and their properties. 
%Moreover, we will briefly review the geometry of Grassmann manifolds and discuss the conjugate gradient descent method 
%on Grassmannian as it is required during the training of the proposed DR algorithms.

\textbf{Notation:} 
Throughout the paper, bold capital letters denote matrices (\eg, $\Mat{X}$) and bold lower-case letters denote column vectors (\eg, $\Vec{x}$).
$\mathbf{I}_n$ is the $n \times n$ identity matrix.
$GL(n)$ denotes the  general linear group, \ie, the group of real invertible $n \times n$ matrices.
$\SYM{n}$ is the space of real $n \times n$ symmetric matrices.
$\SPD{n}$ and $\GRASS{p}{n}$ are the SPD and Grassmannian manifolds, respectively, and will be formally defined later.
$\mathrm{Diag} \left( \lambda_1,\lambda_2,\cdots,\lambda_n \right)$ is a diagonal matrix with the
real values $\lambda_1,\lambda_2,\cdots,\lambda_n$ as diagonal elements.
The principal matrix logarithm $\log(\cdot):\SPD{n} \to \SYM{n}$ is defined as 
\begin{equation}
	\log(\Mat{X}) = \sum\limits_{r=1}^{\infty}{\frac{(-1)^{r-1}}{r}\left(\Mat{X}-\mathbf{I}_n\right)^r} = 
	\Mat{U} \DIAG\left(\log(\lambda_i) \right) \Mat{U}^T,
	\label{eqn:principal_log_oper}
\end{equation}
with \mbox{$\Mat{X} = \Mat{U} \DIAG\left(\lambda_i \right) \Mat{U}^T $}.
Similarly, the matrix exponential $\exp(\cdot):\SYM{n} \to \SPD{n}$ is defined as
\begin{equation}
	\exp(\Mat{X}) = \sum\limits_{r=0}^{\infty}{\frac{1}{r!}\Mat{X}^r} = 
	\Mat{U} \DIAG\left(\exp(\lambda_i) \right) \Mat{U}^T,
	\label{eqn:principal_exp_oper}
\end{equation}
with \mbox{$\Mat{X} = \Mat{U} \DIAG\left(\lambda_i \right) \Mat{U}^T $}. 

\subsection{The Geometry of SPD Manifolds}
An $n \times n$ real SPD matrix $\Mat{X}$ has the property that $\Vec{v}^T\Mat{X}\Vec{v} > 0$ for all non-zero $\Vec{v} \in \mathbb{R}^n$. 
The space of $n \times n$ SPD matrices, denoted by $\SPD{n}$ forms the interior of a convex cone in the $n(n+1)/2$-dimensional Euclidean space.
$\SPD{n}$ is mostly studied when endowed with the Affine Invariant Riemannian Metric (AIRM)~\cite{Pennec_IJCV_2006}, defined as

\begin{align}
	\langle \Vec{v}, \Vec{w} \rangle_\Mat{P} &\triangleq  \langle \Mat{P}^{-1/2}\Vec{v}\Mat{P}^{-1/2}, \Mat{P}^{-1/2}\Vec{w}\Mat{P}^{-1/2} \rangle
	\notag \\
	&= \tr \left( \Mat{P}^{-1} \Vec{v} \Mat{P}^{-1} \Vec{w}\right)\;,
	\label{eqn:AIRM_equ}
\end{align}

\noindent
for $\Mat{P} \in \SPD{n}$ and $\Vec{v},\Vec{w} \in T_{\Mat{P}}{\SPD{n}}$, where $T_{\Mat{P}}{\mathcal{M}}$ denotes the 
tangent space of the manifold $\mathcal{M}$ at $\Mat{P}$. 
This metric induces the following geodesic distance between points
$\Mat{X}$ and $\Mat{Y}$:

\begin{equation}
\delta_R(\Mat{X},\Mat{Y}) = \|\log(\Mat{X}^{-1/2}\Mat{Y}\Mat{X}^{-1/2})\|_F\;.
\label{eqn:geodesic_distance}
\end{equation}
The AIRM has several useful properties such as invariance to affine transformations (as the name implies), \ie, 
$\delta_R(\Mat{X},\Mat{Y}) = \delta_R(\Mat{A}\Mat{X}\Mat{A}^T,\Mat{A}\Mat{Y}\Mat{A}^T)$. 
For in-depth discussions of the AIRM, we refer the interested reader to~\cite{Pennec_IJCV_2006} 
and~\cite{BHATIA_2007}.

\subsection{Bregman Divergences}
\label{sec:subsec_bregman}

We now introduce two divergences derived from the Bregman matrix divergence, namely the Jeffrey and Stein divergences.
Below, we discuss their properties and establish some connections with the AIRM, which motivated our choice of these divergences in our DR formulations.

\begin{definition} \label{def:bregman_divergence}
Let $\zeta : \mathcal{S}_{++}^{n} \rightarrow \mathbb{R}$ be a strictly
convex and differentiable function defined on the symmetric positive cone $\SPD{n}$. The Bregman 
matrix divergence 	$d_\zeta : \SPD{n} \times \SPD{n} \rightarrow [0,\infty)$ is defined as
\begin{equation}
	\label{eqn:Bregman_Div}
	d_\zeta(\Mat{X},\Mat{Y}) = \zeta(\Mat{X}) - \zeta(\Mat{Y}) - \langle \nabla_{\zeta} (\Mat{Y}) , \Mat{X} - \Mat{Y} \rangle \; ,
\end{equation}
where
$\langle \Mat{X} , \Mat{Y} \rangle \mbox{=} \tr \big( \Mat{X}^T \Mat{Y} \big) $ is the Frobenius inner product,
and	$ \nabla_{\zeta}(\Mat{Y})$ represents the gradient of $\zeta$ evaluated at $\Mat{Y}$.
\end{definition}
	
The Bregman divergence is asymmetric, non-negative, and definite (\ie , $d_\zeta(\Mat{X},\Mat{Y}) = 0, \; \text{iff}~ \Mat{X} =\Mat{Y}$).
While the Bregman divergence enjoys a variety of useful properties~\cite{Kulis:2009:JMLR},
its asymmetric behavior is often a hindrance.	
In this paper we are interested in two types of symmetrized Bregman divergences, namely the {\it Stein}  and the {\it Jeffrey} divergences.

\begin{definition} \label{def:stein_divergence}
The Stein, or $S$, divergence (also known as Jensen-Bregman LogDet divergence~\cite{Cherian:PAMI:2013})
is obtained from the Bregman divergence of Eq.~\ref{eqn:Bregman_Div}
by using $\zeta(\Mat{X}) = - \log \det(\Mat{X})$ as seed function and by {\it Jensen-Shannon} symmetrization. This yields
\begin{align}
	\delta_S^2(\Mat{X},\Mat{Y}) &\triangleq \frac{1}{2}	
   	d_{\zeta}\left( \Mat{X},\frac{\Mat{X}+\Mat{Y}}{2} \right) +
	\frac{1}{2} d_{\zeta}\left( \Mat{Y},\frac{\Mat{X}+\Mat{Y}}{2} \right) \notag\\
    &= \log \det\bigg( \frac{\Mat{X}+\Mat{Y}}{2} \bigg)
    - \frac{1}{2}  \log \det ( \Mat{X}\Mat{Y})  \;. 
    \label{eqn:Stein_Div}
\end{align}%	
\end{definition}
	 
\begin{definition} \label{def:kl_divergence}
The Jeffrey, or $J$, divergence (also known as symmetric KL divergence) is obtained from the Bregman divergence of 	
Eq.~\ref{eqn:Bregman_Div} by using $\zeta(\Mat{X}) = -\ln \det(\Mat{X})$ as seed function and by direct symmetrization.  This yields
\begin{align}
	\delta_J^2(\Mat{X},\Mat{Y}) &\triangleq \frac{1}{2} d_\zeta(\Mat{X},\Mat{Y}) + \frac{1}{2}d_\zeta(\Mat{Y},\Mat{X}) \notag\\
	&= \frac{1}{2}\tr (\Mat{X}^{-1}\Mat{Y}) - \frac{1}{2}\log \det(\Mat{X}^{-1}\Mat{Y}) \notag\\
	&+   \frac{1}{2}\tr (\Mat{Y}^{-1}\Mat{X}) - \frac{1}{2}\log \det(\Mat{Y}^{-1}\Mat{X}) -n \notag\\			  
	&= \frac{1}{2}\tr (\Mat{X}^{-1}\Mat{Y}) + \frac{1}{2}\tr (\Mat{Y}^{-1}\Mat{X}) -n \;.
	\label{eqn:KL_Div}
\end{align}
\end{definition}

The $S$ and $J$ divergences have a variety of properties akin to those of the AIRM.
Below, we present the properties that motivated us to perform DR on $\SPD{n}$ using such divergences.

\subsubsection*{\textbf{Invariance properties}}
An especially attractive property for the computer vision community is the invariance of the $J$ and $S$ divergences to affine transforms.
More specifically (and similarly to the AIRM), for $\Mat{A} \in \rm{GL}(n)$, we have
\begin{align*}
	\delta_S^2(\Mat{X},\Mat{Y}) & = \delta_S^2(\Mat{A}\Mat{X}\Mat{A}^T,\Mat{A}\Mat{Y}\Mat{A}^T). \\
	\delta_J^2(\Mat{X},\Mat{Y}) & = \delta_J^2(\Mat{A}\Mat{X}\Mat{A}^T,\Mat{A}\Mat{Y}\Mat{A}^T).
\end{align*}%
\noindent
Furthermore, both divergences are invariant to inversion, \ie, 

\begin{align*}
	J(\Mat{X},\Mat{Y}) & = J(\Mat{X}^{-1},\Mat{Y}^{-1}) \\
	S(\Mat{X},\Mat{Y}) & = S(\Mat{X}^{-1},\Mat{Y}^{-1}).
\end{align*}%
\noindent
Proofs for the above statements can be readily obtained by plugging the affine representations (\eg $\Mat{A}\Mat{X}\Mat{A}^T$)
or inverses into the definition of the $J$ and $S$ divergences.

\subsubsection*{\textbf{Equality of curve lengths}}
Given a curve $\gamma:[0,1] \to \SPD{n}$, let $L_R$, $L_S$ and $L_J$ denote the length of $\gamma$ under 
AIRM, Stein and J-divergence, respectively. Then $L_R = 2\sqrt{2} L_S$ and $L_R = \sqrt{2} L_J$.
The proof for the case of $L_S$ is given in~\cite{Harandi_ECCV_2014} and for the $L_J$ is relegated to the supplementary material.

Beyond the previous two properties, the $S$ divergence also enjoys the following Hilbert space embedding property, 
which does not hold for AIRM~\cite{Sadeep_CVPR_2013}.

\subsubsection*{\textbf{Hilbert space embedding}}
The $S$ divergence admits a Hilbert space embedding in the form of a Radial Basis Function (RBF) kernel~\cite{Sra_NIPS_2012}. 
More specifically, the kernel 
\begin{equation}
	k_S(\Mat{X},\Mat{Y}) = \exp \{ -\beta \delta_S^2(\Mat{X},\Mat{Y}) \},
	\label{eqn:kernel_s_div}
\end{equation}
is positive definite for 
\begin{equation}
    \beta \in \left \{ \frac{1}{2},\frac{2}{2}, \cdots, \frac{n-1}{2} \right \}
    \cup \left (\frac{1}{2}(n-1),\infty \right )\;.
    \label{eqn:Stein_Krnl2}
\end{equation}

The kernel $k_J(\cdot,\cdot) = \exp \{-\beta \delta_J^2(\Mat{X},\Mat{Y})\}$ has been previously considered to be positive definite (\eg, equations 5 and 6 in~\cite{Moreno_2003}). However, we find that this is not the case as can be seen by the following counter example
\begin{scriptsize}
\begin{align*}
\Mat{X}_1 = \begin{bmatrix}
72		&1\\
1		&88
\end{bmatrix},~
\Mat{X}_2 = \begin{bmatrix}
123		&-10\\
-10	&66
\end{bmatrix},~
\Mat{X}_3 = \begin{bmatrix}
51		&5\\
5		&109
\end{bmatrix}.
\end{align*}
\end{scriptsize}
Here, the matrix $[\Mat{K}]_{i,j} = k_J(\Mat{X}_i,\Mat{X}_j)$ has a negative eigenvalue for $\beta = 1/4$. 
With the mathematical tools discussed in this section, we can now turn to developing our DR algorithms for SPD matrices. In the following sections, we first start by introducing our approach to tackling supervised DR on SPD manifolds and then discuss the unsupervised scenario.

\section{DR on SPD Manifolds}
\label{sec:DR}
In this section, we describe our approach to learning an embedding of high-dimensional SPD matrices to a more discriminative, low-dimensional SPD manifold. In doing so, we propose to learn the parameters $\Mat{W} \in \mathbb{R}^{n \times m}$, $m<n$, of a generic mapping 
$f_\Mat{W}: \SPD{n}  \rightarrow \SPD{m}$, which we define as 
\begin{equation}
	f_\Mat{W}(\Mat{X}) = \Mat{W}^T\Mat{X}\Mat{W}.
	\label{eqn:generic_mapping}
\end{equation}
Clearly, for a full rank matrix $\Mat{W}$, if $\SPD{n} \ni \Mat{X} \succ 0$ then $\SPD{m} \ni \Mat{W}^T\Mat{X}\Mat{W} \succ 0$. 
Given a set of training SPD matrices $\mathcal{X} = \left\{\Mat{X}_1,\cdots, \Mat{X}_p \right\}$, where each matrix $\Mat{X}_i \in \SPD{n}$, our goal is to find the transformation $\Mat{W}$ such that the resulting low-dimensional SPD manifold preserves some interesting structure of the original data. In the remainder of this section, we discuss two different such structures: one coming from the availability of class labels, and one derived from unsupervised data.

\subsection{Supervised Dimensionality Reduction}
\label{sec:supervised_dr}
Let us first assume that each point $\Mat{X}_i \in \SPD{n}$ belongs to one of $C$ possible classes 
and denote its class label by $y_i$.  In this scenario, we propose to encode the structure of the data via an affinity function $a: \SPD{n} \times \SPD{n} \to \mathbb{R}$. That is $a(\Mat{X},\Mat{Y})$ measures some notion of affinity between matrices $\Mat{X}$ and $\Mat{Y}$, and may be negative. In particular, we make use of the class labels to build $a(\cdot,\cdot)$\footnote{Note that the framework developed in this section could also apply to the unsupervised and semi-supervised settings by changing the definition of the affinity function accordingly.} and define an affinity function that encodes the notions of intra-class and inter-class distances. In short, our goal is to find a mapping
that minimizes the intra-class distances while simultaneously maximizing the inter-class distances (\ie, a discriminative mapping). 

More specifically, we make use of notions of within-class similarity $g_w: \SPD{n} \times \SPD{n} \to \mathbb{R}_+$
and between-class similarity $g_b: \SPD{n} \times \SPD{n} \to \mathbb{R}_+$ to compute the affinity between two SPD matrices. In particular, we define $g_w(\cdot,\cdot)$ and $g_b(\cdot,\cdot)$ to be binary functions given by
\begin{equation}
   	g_w(\Mat{X}_i,\Mat{X}_j) =
   	\left\{
 		\begin{matrix}
       	1, & \mbox{if} \; \Mat{X}_i \in N_w(\Mat{X}_j ) \; \mbox{~or~} \; \Mat{X}_j \in N_w(\Mat{X}_i ) \\
   		0, & \mbox{otherwise}
   		\end{matrix}
   	\right.
\label{eqn:Gw}
\end{equation}%
\noindent
\begin{equation}
   	g_b(\Mat{X}_i,\Mat{X}_j) =
   	\left\{
   	\begin{matrix}
       	1, & \mbox{if} \; \Mat{X}_i \in N_b(\Mat{X}_j ) \; \mbox{~or~} \; \Mat{X}_j \in N_b(\Mat{X}_i ) \\
       	0, & \mbox{otherwise}
   	\end{matrix}
   	\right.
   	\label{eqn:Gb}
\end{equation}%
\noindent
where $N_w(\Mat{X}_i)$ is the set of $\nu_w$ nearest neighbours of $\Mat{X}_i$ that share the same label as $y_i$, 
and $N_b(\Mat{X}_i)$ contains the $\nu_b$ nearest neighbours of $\Mat{X}_i$
having different labels. Note that nearest neighbours are computed according to the AIRM, the Stein divergence, or the Jeffrey divergence. The affinity function $a(\cdot,\cdot)$ is then defined as
\begin{equation}
	a(\Mat{X}_i,\Mat{X}_j) = g_w(\Mat{X}_i,\Mat{X}_j) - g_b(\Mat{X}_i,\Mat{X}_j) \;,
	\label{eqn:affinity_Gw_Gb_combined}
\end{equation}
which resembles the Maximum Margin Criterion (MMC) of~\cite{LI_TNN_2006}. 

Having  $a(\Mat{X},\Mat{Y})$  at our disposal, we propose to search for an embedding such that the affinity between pairs of SPD matrices is reflected by a measure of similarity on the low-dimensional SPD manifold. In particular, we make use of the AIRM, the Stein divergence, or the Jeffrey divergence to encode similarity between SPD matrices. This lets us write a cost function of the form
\begin{equation}
	\label{eqn:dtl_cost_fun}
	L(\Mat{W}) = \sum_{\substack{i,j=1 \\ j \neq i}}^p a(\Mat{X}_i,\Mat{X}_j) 
	\delta^2 \left(\Mat{W}^T\Mat{X}_i\Mat{W},\Mat{W}^T\Mat{X}_j\Mat{W}\right)\;,
\end{equation}
where $\delta$ is $\delta_R$, $\delta_S$ or $\delta_J$.		
To avoid degeneracies and ensure that the resulting embedding forms a valid SPD manifold, \ie, $\Mat{W}^T\Mat{X}\Mat{W} \succ 0,~ \forall \Mat{X} \in \SPD{n}$, we need $\Mat{W}$ to have full rank. Here, we enforce this requirement by imposing the unitary constraints $\Mat{W}^T\Mat{W} = \mathbf{I}_m$. Note that, with the affine invariance property, this entails no loss of generality. 
Indeed, any full rank matrix $\tilde{\Mat{W}}$ can be expressed as $\Mat{W}\Mat{M}$, with $\Mat{W}$ an orthonormal matrix and $\Mat{M} \in \mathrm{GL}(m)$. The affine invariance property of the metric therefore guarantees that
\begin{equation*}
	L(\tilde{\Mat{W}}) = L(\Mat{W}\Mat{M})  = L(\Mat{W})\;.
\end{equation*}	

As a result, learning can be expressed as the minimization problem
\begin{align}
	\label{eqn:tl_main_equ}
	\Mat{W}^{\ast} = ~&\underset{\Mat{W} \in  \mathbb{R}^{n \times m}}{\arg\min}
	\sum_{\substack{i,j=1 \\ j \neq i}}^p a(\Mat{X}_i,\Mat{X}_j) \delta^2\left( \Mat{W}^T\Mat{X}_i\Mat{W}, \Mat{W}^T\Mat{X}_j\Mat{W} \right) \notag\\
	&{\rm s.t.}~\Mat{W}^T\Mat{W} = \mathbf{I}_m\;.
\end{align}

As will be discussed in Section~\ref{sec:opt},~\eqref{eqn:tl_main_equ} is an optimization problem on a Grassmann manifold, and can thus be solved by Newton-type methods on the Grassmannian $\GRASS{m}{n}$. To this end, we need to compute the Jacobian of $\delta^2(\cdot,\cdot)$ with respect to $\Mat{W}$.
%as required by Eq.~\eqref{eqn:grad_euc_2_grass}.
For the $S$ divergence, this Jacobian matrix, denoted by $D_\Mat{W}(\cdot)$ hereafter, can be obtained by noting that (see Eq. 53 in~\cite{Petersen_2012})
\begin{equation}
	D_\Mat{W}  \log\det\big(\Mat{W}^T\Mat{X}\Mat{W}\big)  = 2\Mat{X}\Mat{W}\big( \Mat{W}^T\Mat{X}\Mat{W}\big)^{-1}\;.
	\label{eqn:logdet_gradient}
\end{equation}	

\noindent
This lets us express the Jacobian of the Stein divergence as
%{\small
\begin{align}
	\label{eqn:jacobian_tl_stein}
	&D_\Mat{W} \delta_S^2 \big( \Mat{W}^T\Mat{X}\Mat{W}, \Mat{W}^T\Mat{Y}\Mat{W} \big) = -\Mat{X}\Mat{W}(\Mat{W}^T\Mat{X}\Mat{W})^{-1}\\
	&-\Mat{Y}\Mat{W}(\Mat{W}^T\Mat{Y}\Mat{W})^{-1} + (\Mat{X}+\Mat{Y})\Mat{W}
	\big(\Mat{W}^T\frac{\Mat{X}+\Mat{Y}}{2}\Mat{W} \big)^{-1} \notag\;.
\end{align}

For the $J$ divergence, the Jacobian can be obtained by noting that (see Eq. 126 in~\cite{Petersen_2012})
\begin{align}
	&D_\Mat{W}  \tr \Big(\Mat{W}^T\Mat{X}\Mat{W} \big(\Mat{W}^T\Mat{Y}\Mat{W}\big)^{-1}\Big)  = 
	2\Mat{X}\Mat{W}\big( \Mat{W}^T\Mat{Y}\Mat{W}\big)^{-1} \notag\\ 
	&-2\Mat{Y}\Mat{W}\big( \Mat{W}^T\Mat{Y}\Mat{W}\big)^{-1}\big( \Mat{W}^T\Mat{X}\Mat{W}\big)\big( \Mat{W}^T\Mat{Y}\Mat{W}\big)^{-1}  \;,
	\label{eqn:trace_inv_gradient}
\end{align}	
which leads to 
\begin{align}
	\label{eqn:jacobian_tl_jeffrey}
	&D_\Mat{W} \delta_J^2 \big( \Mat{W}^T\Mat{X}\Mat{W}, \Mat{W}^T\Mat{Y}\Mat{W} \big) = 
	\\ 
	&	\Mat{X}\Mat{W} \hspace{-0.3ex} \Big( \hspace{-0.2ex} \big(\Mat{W}^T \hspace{-0.5ex} 
	\Mat{Y}\Mat{W}\big)^{\hspace{-0.5ex}-1} \hspace{-1.7ex} -		
	\big( \Mat{W}^T \hspace{-0.5ex} \Mat{X}\Mat{W}\big)^{\hspace{-0.5ex}-1} 
	\Mat{W}^T \hspace{-0.5ex} \Mat{Y}\Mat{W} 
	\big( \Mat{W}^T \hspace{-0.5ex} \Mat{X}\Mat{W}\big)^{\hspace{-0.5ex}-1} \Big) + \notag \\
	&\Mat{Y}\Mat{W} \hspace{-0.3ex} \Big( \hspace{-0.2ex} \big(\Mat{W}^T \hspace{-0.5ex} 
	\Mat{X}\Mat{W}\big)^{\hspace{-0.5ex}-1} \hspace{-1.7ex} -		
	\big( \Mat{W}^T \hspace{-0.5ex} \Mat{Y}\Mat{W}\big)^{\hspace{-0.5ex}-1} \Mat{W}^T \hspace{-0.5ex} \Mat{X}\Mat{W} 
	\big( \Mat{W}^T \hspace{-0.5ex} \Mat{Y}\Mat{W}\big)^{\hspace{-0.5ex}-1} \Big) \;. \notag
\end{align}

For the AIRM, we can exploit the fact that $\tr\left( \log(\Mat{X}) \right) = \ln \det\big ( \Mat{X} \big), \forall\Mat{X} \in \SPD{n}$. We can then derive the Jacobian by utilizing Eq.~\ref{eqn:logdet_gradient}, which yields
\begin{align}
	\label{eqn:jacobian_tl_airm}
	&D_\Mat{W}{\delta^2_{R}} \left( \Mat{W}^T\Mat{X}\Mat{W}, \Mat{W}^T\Mat{Y}\Mat{W} \right)  = \\
	&4\Big(\Mat{X}\Mat{W}(\Mat{W}^T\Mat{X}\Mat{W})^{-1} -
			\Mat{Y}\Mat{W}(\Mat{W}^T\Mat{Y}\Mat{W})^{-1}\Big) \notag \\
	&\times	\log\Big(\Mat{W}^T\Mat{X}\Mat{W}\big(\Mat{W}^T\Mat{Y}\Mat{W}\big)^{-1}\Big)\,.	\notag
	%\label{eqn:jacobian_airm}	
\end{align}

Our supervised DR method for SPD matrices is summarized in Algorithm~\ref{alg:tl_alg}, where $\tau(\Mat{H},\Mat{W}_0,\Mat{W}_1)$ denotes the parallel transport of tangent vector $\Mat{H}$ from $\Mat{W}_0$ to $\Mat{W}_1$ (see Section~\ref{sec:opt} for details). 

\begin{algorithm}[!tb]		
	\caption{\small Supervised SPD DR}
	\label{alg:tl_alg}
	\begin{algorithmic}
	\vspace{0.1cm}
	\STATE {\bfseries Input:}\\
	{A set of SPD matrices $\{ \Mat{X}_i \}_{i=1}^p,~\Mat{X}_i~\in~\SPD{n}$.\\ 
	The corresponding labels $\{ y_i \}_{i=1}^p,~y_i \in \{1,2,\cdots,C\}$.\\
    The dimensionality $m$ of the induced manifold.
	}
	\vspace{0.1cm}
	\STATE {\bfseries Output:}\\
	{The mapping $\Mat{W} \in \mathcal{G}(m,n)$
	}
	\vspace{0.25cm}
	\STATE Generate $a(\Mat{X}_i,\Mat{X}_j)$ using \eqref{eqn:affinity_Gw_Gb_combined}
	\STATE $\Mat{W}_{old} \gets \mathbf{I}_{n\times m}$ (\ie, the truncated identity matrix)
	\STATE $\Mat{W} \gets \Mat{W}_{old}$
    \STATE $\Mat{H}_{old} \gets \Mat{0}$
	\REPEAT		
	    \STATE $\Mat{H} \gets -\nabla_{\Mat{W}} L(\Mat{W}) +\eta \tau(\Mat{H}_{old},\Mat{W}_{old},\Mat{W})$
		\STATE Line search along the geodesic starting from $\Mat{W}$ in the direction $\Mat{H}$
		to find \mbox{$\Mat{W}^\ast = \underset{\Mat{W}}{\rm{argmin}}~L(\Mat{W})$}			
		\STATE $\Mat{H}_{old} \gets \Mat{H}$
		\STATE $\Mat{W}_{old} \gets \Mat{W}$	
		\STATE $\Mat{W} \gets \Mat{W}^\ast$
	\UNTIL{convergence}
	\end{algorithmic}
\end{algorithm} 

\subsection{Unsupervised Dimensionality Reduction}
\label{sec:unsupervised_tl}

We now turn to the scenario where we do {\it not} have access to the labels of the training samples. In other words, our training data only consists of  a set of SPD matrices  $\{\Mat{X}_i\}_{i = 1}^{p},\;\Mat{X}_i \in \SPD{n}$. To tackle this unsupervised DR scenario, we draw inspiration from algorithms, such as PCA and MVU~\cite{Weinberger_IJCV_2006}. These algorithms search for a low-dimensional latent space where the points have maximum variance, \ie, collectively have maximum distance to their mean. 

Here, we follow the same intuition, but with the goal of mapping high-dimensional SPD matrices to lower-dimensional ones. To this end, we express unsupervised DR on SPD manifolds as the optimization problem
\begin{align}
	\label{eqn:pca_tl}
	\Mat{W}^{\ast} = ~&\underset{\Mat{W} \in  \mathbb{R}^{n \times m}}{\arg\max}
	\sum_{i = 1}^p \delta^2\left( \Mat{W}^T\Mat{X}_i\Mat{W}, \Mat{W}^T\Mat{M}\Mat{W} \right) \notag\\
	&{\rm s.t.}~\Mat{W}^T\Mat{W} = \mathbf{I}_m\;,
\end{align}
with $\Mat{M}$ the mean of  $\{\Mat{X}_i\}_{i = 1}^{p}$ with respect to the metric $\delta$. 
As in the supervised case, and as discussed in more details in Section~\ref{sec:opt},~\eqref{eqn:pca_tl} corresponds to an optimization problem in the Grassmann manifold. We therefore again opt for a Newton-type method on the Grassmannian to (approximately) solve it. Note that the gradient of the objective function has essentially the same form as in the supervised case, and can thus be easily obtained from Eq.~\eqref{eqn:jacobian_tl_stein},
Eq.~\eqref{eqn:jacobian_tl_jeffrey} and Eq.~\eqref{eqn:jacobian_tl_airm} for the Stein divergence, the AIRM and J-divergence, respectively.

As mentioned above,~\eqref{eqn:pca_tl} depends on the mean of the training samples. Since these samples lie on a manifold, special care must be taken to compute their means. In particular, we make use of the Fr\'{e}chet formulation to obtain $\Mat{M}$. This can be expressed as
\begin{align}
	\Mat{M}^\ast \triangleq \underset{\Mat{M} \in \SPD{n}}{\arg\min}~~\sum_{i=1}^p \delta^2(\Mat{X}_i,\Mat{M})\;.
	\label{eqn:frechet_mean}
\end{align}
For the AIRM, this is equivalent to computing the Riemannian (Karcher) mean by exploiting the exponential 
and logarithm maps~\cite{Pennec_IJCV_2006}. For the Stein metric, we make use of the iterative Convex Concave Procedure (CCCP) introduced in~\cite{Cherian:PAMI:2013}. For the Jeffrey divergence, we show below that, unlike the AIRM and the Stein divergence, the Fr\'{e}chet mean can be computed analytically.

\begin{theorem}
The Fr\'{e}chet mean of a set of points $\big\{ \Mat{X}_i \big\}_{i=1}^{p},\; \Mat{X}_i \in \SPD{n}$, based on the Jeffrey metric, \ie, the minimizer of Eq.~\ref{eqn:frechet_mean} for $\delta^2(\cdot,\cdot) = \delta_{J}^2$, is given by
\begin{align}
	&\Mat{M}^\ast =  {\bf L}^{-1/2}\big({\bf L}^{1/2} \Mat{\Gamma} {\bf L}^{1/2}\big)^{1/2}{\bf L}^{-1/2}	
	\label{eqn:closed_mean_J}
\end{align}
with ${\bf L} = \sum_{i = 1}^{p}\Mat{X}_i^{-1}$ and $\Mat{\Gamma} = \sum_{i = 1}^{N}\Mat{X}_i$.

\begin{proof}
To prove this theorem, let us first we recall that, for $\Mat{A} \succ 0$ and $\Mat{B} \succeq 0$, a quadratic equation of the form $\Mat{X}\Mat{A}\Mat{X} = \Mat{B}$, called 
a \textit{Riccati} equation, has only one positive definite solution of the form~\cite{BHATIA_2007}
\begin{align}
	\Mat{X} = \Mat{A}^{-1/2}\big(\Mat{A}^{1/2} \Mat{B} \Mat{A}^{1/2}\big)^{1/2}\Mat{A}^{-1/2}\;.
	\label{eqn:riccati_sol}
\end{align}

According to Eq.~\ref{eqn:frechet_mean}, and by making use of the $J$ divergence, the Fr\'{e}chet mean must satisfy
\begin{equation}
	\frac{\partial \sum\nolimits_{i=1}^N \delta_J^2(\Mat{X}_i,\Mat{M})}{\partial \Mat{M}} = 0\;.
	\label{eqn:proof_mean_j1}
\end{equation}
Given that 
\begin{equation*}
	\frac{\partial \tr(\Mat{X}\Mat{M}^{-1})}{\partial \Mat{M}} = 
    \Mat{M}^{-1}\Mat{X}\Mat{M}^{-1}\;,
\end{equation*}
we have
\begin{align*}
	&\frac{\partial \sum\nolimits_{i=1}^N \delta_J^2(\Mat{X}_i,\Mat{M})}{\partial \Mat{M}} = 
	\sum\limits_{i=1}^N \Mat{X}_i^{-1} - \sum\limits_{i=1}^N\Mat{M}^{-1}\Mat{X}_i\Mat{M}^{-1} = 0\\
	&\Leftrightarrow \Mat{M} \sum\limits_{i=1}^N \Mat{X}_i^{-1} \Mat{M} = \sum\limits_{i=1}^N \Mat{X}_i\;,
\end{align*}
which is a \textit{Riccati} equation with a unique and closed form solution. A slightly different proof is also provided in~\cite{Vemuri_CVPR_2004}.
\end{proof}
\end{theorem}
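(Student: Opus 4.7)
The plan is to attack this via first-order optimality: since the Jeffrey divergence is smooth and strictly convex in an appropriate sense on $\SPD{n}$, an interior critical point of the Fréchet functional is the unique minimizer, so it suffices to solve $\nabla_{\Mat{M}} \sum_i \delta_J^2(\Mat{X}_i,\Mat{M}) = 0$ and show the solution has the claimed closed form.

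First I would write out $\delta_J^2(\Mat{X}_i,\Mat{M}) = \tfrac{1}{2}\tr(\Mat{X}_i^{-1}\Mat{M}) + \tfrac{1}{2}\tr(\Mat{M}^{-1}\Mat{X}_i) - n$ as in Eq.~\ref{eqn:KL_Div}, and differentiate term by term in $\Mat{M}$. The first term contributes $\tfrac{1}{2}\Mat{X}_i^{-1}$, while for the second term I would invoke the standard matrix-calculus identity $\partial_{\Mat{M}}\tr(\Mat{M}^{-1}\Mat{X}) = -\Mat{M}^{-1}\Mat{X}\Mat{M}^{-1}$ (this is essentially the fact used just below Eq.~\ref{eqn:proof_mean_j1}). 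Summing over $i$ and setting the result to zero yields
\begin{equation*}
\sum_{i=1}^p \Mat{X}_i^{-1} \;=\; \Mat{M}^{-1}\Bigl(\sum_{i=1}^p \Mat{X}_i\Bigr)\Mat{M}^{-1}.
\end{equation*}

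Next I would multiply on both sides by $\Mat{M}$ to clear inverses, obtaining $\Mat{M}\,{\bf L}\,\Mat{M} = \Mat{\Gamma}$, which is exactly a Riccati equation $\Mat{X}\Mat{A}\Mat{X} = \Mat{B}$ with $\Mat{A} = {\bf L}$ and $\Mat{B} = \Mat{\Gamma}$. Since $\Mat{X}_i \succ 0$ for all $i$, both ${\bf L}$ and $\Mat{\Gamma}$ are in $\SPD{n}$, so the hypotheses of the Riccati lemma (Eq.~\ref{eqn:riccati_sol}) are met. Substituting $\Mat{A}={\bf L}$, $\Mat{B}=\Mat{\Gamma}$ into Eq.~\ref{eqn:riccati_sol} gives the claimed formula for $\Mat{M}^{\ast}$.

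The main obstacle, and what I would want to be careful about, is justifying that this critical point is the global minimizer rather than merely a stationary point. I would argue this by noting that the map $\Mat{M}\mapsto \tr(\Mat{M}^{-1}\Mat{X}_i)$ is strictly geodesically convex on $\SPD{n}$ under the AIRM for each $\Mat{X}_i\succ 0$, and $\Mat{M}\mapsto \tr(\Mat{X}_i^{-1}\Mat{M})$ is linear, so the sum is strictly geodesically convex and admits a unique minimizer (alternatively one can cite the uniqueness of the positive-definite solution of the Riccati equation, which is already part of the stated Riccati lemma). A minor but worthwhile sanity check is dimensional/order-of-operations consistency in Eq.~\ref{eqn:closed_mean_J} and the observation that $\Mat{M}^\ast \in \SPD{n}$, which follows directly from the Riccati formula since ${\bf L}^{1/2}\Mat{\Gamma}\,{\bf L}^{1/2}\succ 0$ and its principal square root is in $\SPD{n}$.
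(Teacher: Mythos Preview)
Your proposal is correct and follows essentially the same route as the paper: differentiate the Fr\'echet functional, obtain the Riccati equation $\Mat{M}\,{\bf L}\,\Mat{M}=\Mat{\Gamma}$, and invoke the closed-form Riccati solution. The only difference is that you add an explicit argument (via geodesic convexity and uniqueness of the positive-definite Riccati solution) for why the stationary point is the global minimizer, a point the paper leaves implicit.
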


\begin{remark}
There is a subtle difference between PCA in Euclidean space and the solution developed here. More specifically, unlike PCA
in Euclidean space $\Mat{W}^T \Mat{M} \Mat{W}$ does not necessarily represent the mean of the transformed data in $\SPD{m}$. 
That is,
\begin{equation*}
\Mat{W}^T \Mat{M} \Mat{W} \neq \underset{\Mat{M} \in \SPD{m}}{\arg\min}~~\sum_{i=1}^N \delta^2(\Mat{W}^T\Mat{X}_i\Mat{W},\Mat{M})
\end{equation*}
in general.
\end{remark}

\subsection{Optimization Framework}
\label{sec:opt}

Both the unsupervised and supervised DR techniques introduced above can be viewed as solving an
optimization problem with a unitary constraint, which can generally be written as
\begin{align}
	\label{eqn:opt_main_equ}
	&\underset{\Mat{W}}{{\min}}~f(\Mat{W}) \notag \\
	&{\rm s.t.}~\Mat{W}^T\Mat{W} = \mathbf{I}_m\;,
\end{align}
where $f(\Mat{W})$ is the desired cost function and $\Mat{W} \in \mathbb{R}^{n \times m}$. In Euclidean space 
problems of the form of~\eqref{eqn:opt_main_equ} are typically cast as eigenvalue 
problems (\eg,~\cite{Saul_JMLR_2003,LI_TNN_2006,Yan_PAMI_2007,Jia_TNN_2009,Kokiopoulou_2011}). However, the complexity of 
our cost functions prohibits us from doing so. Instead, we propose to make use of manifold-based optimization techniques.

Recent advances in optimization methods formulate problems with unitary constraints as optimization problems on
Stiefel or Grassmann manifolds~\cite{Edelman_1998,Absil_2008}. More specifically, the geometrically correct setting for the  
minimization problem in~\eqref{eqn:opt_main_equ} is, in general, on a Stiefel manifold. 
However, if the cost function $f(\Mat{W})$ is independent from the choice of basis spanned by $\Mat{W}$, that is
if $f(\Mat{W}) = f(\Mat{W}\Mat{R})$ for $\Mat{R} \in \mathcal{O}(m)$,  then the problem is on a Grassmann manifold.
Here $\mathcal{O}(m)$ denotes the group of $m \times m$ orthogonal matrices. 
In our case, because of the affine invariance of the AIRM, the Stein divergence and the Jeffrey divergence, it can easily be checked that our cost function is indeed independent of the choice of basis. We can therefore make use of Grassmannian optimization techniques, and, in particular, of Newton-type optimization, which we briefly review below.

A Grassmann manifold $\GRASS{m}{n}$ is the space of $m$-dimensional linear subspaces of $\mathbb{R}^n$ for $0<m<n$~\cite{Absil_2008}.
Newton-type optimization, such as conjugate gradient (CG), over a Grassmannian is an iterative optimization routine that relies on a notion of search direction. In $\mathbb{R}^n$, such a direction is determined by the gradient vector. Similarly, on an abstract Riemannian manifold $\mathcal{M}$, 
the gradient of a smooth function identifies the direction of maximum ascent. 
Furthermore, the gradient of $f$ at a point
$x \in \mathcal{M}$, denoted by $\nabla f(x)$, is the element of $T_x\mathcal{M}$ satisfying
$\langle \nabla f(x), \zeta \rangle_x = Df(x)[\zeta],~\forall \zeta \in T_x\mathcal{M}$. 
Here, $\langle \cdot, \cdot \rangle_x$ is the Riemannian metric at $x$ and $Df(x)[\zeta]$ denotes
the directional derivative of $f$ at $x$ along direction $\zeta$.

On $\GRASS{m}{n}$, the gradient is expressed as
\begin{equation}
\nabla_\Mat{W}f(\Mat{W}) = (\mathbf{I}_n - \Mat{W}\Mat{W}^T)D_\Mat{W}(f),
\label{eqn:grad_euc_2_grass}
\end{equation}
where $\operatorname{grad}{f}(\Mat{W})$ is the $n \times m$ matrix of partial derivatives of ${f}(\Mat{W})$ 
with respect to the elements of $\Mat{W}$, \ie,
\begin{equation*}
%[\operatorname{grad}{f}(\Mat{W})]_{i,j} = \frac{\partial f(\Mat{W})}{\partial\Mat{W}_{i,j}}.
[D_\Mat{W}(f)]_{i,j} = \frac{\partial f(\Mat{W})}{\partial\Mat{W}_{i,j}}.
\end{equation*}
For our approach, these derivatives are given by Eqs.~\ref{eqn:jacobian_tl_stein},~\ref{eqn:jacobian_tl_jeffrey} and~\ref{eqn:jacobian_tl_airm} for the Stein divergence, the Jeffrey divergence and the AIRM, respectively.

\def \GD_M_SCALE{0.750}
\begin{figure}[!t]
\centering
	\includegraphics[width= \GD_M_SCALE \columnwidth,keepaspectratio]{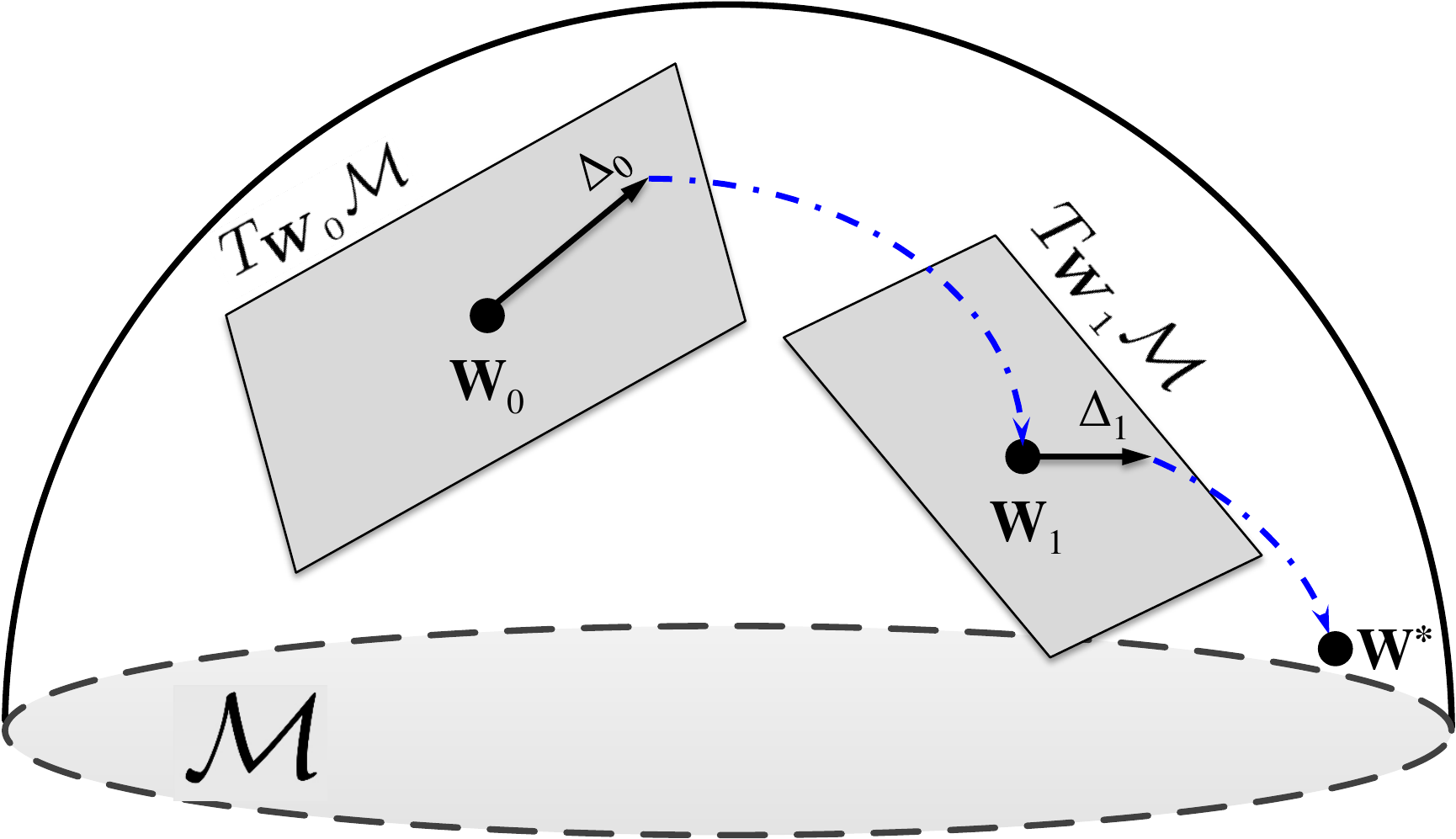}
	\caption{\small \textbf{Newton-type optimization on Riemannian manifolds.}  
	Here $\mathcal{M}$ denotes an abstract Riemannian manifold and $T_{\Mat{W}}\mathcal{M}$ is the tangent space of $\mathcal{M}$ at $\Mat{W}$.
	$\Delta$ represents the gradient of the cost function $f$, for example, $\Delta_0$ is the gradient of $f$ at $\Mat{W}_0$. In each 
	iteration of a Newton-type method, the gradient of the cost function is evaluated and a descent direction 
	is determined (for the steepest descent it is simply  along the gradient). The descent direction is mapped back to the manifold 
	via the exponential map (or a retraction) to identify the new solution. The aforementioned procedure continues until convergence.
	}
\label{fig:gd_M_drawing}
\end{figure}	

The descent direction obtained via $\nabla_\Mat{W}f(\Mat{W})$ needs to be mapped back to the manifold by 
the exponential map or by a retraction (see Chapter 4 in~\cite{Absil_2008} for definitions and detailed explanations).
In the case of the Grassmannian, this can be understood as forcing the unitary constraint while making sure that the cost function decreases. Fig.~\ref{fig:gd_M_drawing} provides a conceptual diagram for Newton-type optimization on Riemannian manifolds. 

Here, in particular, we make use of a CG method on the Grassmannian. CG methods compute the new descent direction by combining the gradient at the current and the previous solutions. To this end, it requires transporting the previous gradient to the current point on the manifold.
Unlike in flat spaces, on a manifold one cannot transport a tangent vector $\Delta$ from one 
point to another point by simple translation. To get a better intuition, take the case where the manifold is a sphere, and consider
two tangent spaces, one located at the pole and one at a point on the equator. Obviously the tangent vectors at the pole 
do not belong to the tangent space at the equator. Therefore, simple vector translation is not sufficient. 
As illustrated in Fig.~\ref{fig:parallel_transport}, transporting 
$\Delta$ from $\Mat{W}$ to $\Mat{V}$ on the manifold $\mathcal{M}$ requires subtracting the normal component 
$\Delta_{\perp}$ at $\Mat{V}$ for the resulting vector to be a tangent vector. Such a transfer of tangent vector 
is called {\it parallel transport}. On the Grassmann manifold, parallel transport, and the other operations required for a CG method, have efficient numerical forms, which makes them well-suited to perform optimization on the manifold.

CG on a Grassmann manifold can be summarized by the following steps:
\begin{itemize}
	\item[\bf (i)] Compute the gradient $\nabla_\Mat{W} f(\Mat{W})$ of the objective function $f(\Mat{W})$ on the manifold at 
	the current solution using
	\begin{equation}
		\nabla_\Mat{W} f(\Mat{W}) = (\mathbf{I}_n - \Mat{W} \Mat{W}^T) D_\Mat{W}(f)\;.
		\label{eqn:gradient_manif}
	\end{equation}
		
	\item[\bf (ii)] Determine the search direction $\Mat{H}$ by parallel transporting the previous search 
	direction and combining it with $\nabla_\Mat{W} f(\Mat{W})$.
		
	\item[\bf (iii)] Perform a line search along the geodesic at $\Mat{W}$ in the direction $\Mat{H}$.
	On the Grassmann manifold, the geodesics going from point $\Mat{X}$ in direction $\Delta$ can be represented
	by the Geodesic Equation~\cite{Absil_2008}
	\begin{equation}
		\Mat{X}(t) =
		\begin{bmatrix}
			\Mat{X}\Mat{V} &\Mat{U}
		\end{bmatrix}
		\begin{bmatrix}
			\cos (\Sigma t) \\
			\sin (\Sigma t)
		\end{bmatrix}
		\Mat{V}^T
		\label{eqn:geodesic_curve}
	\end{equation}
	where $t$ is the parameter indicating the location along the geodesic, and $\Mat{U} \Sigma \Mat{V}^T$
	is the compact singular value decomposition of $\Delta$. 
\end{itemize}
These steps are repeated until convergence to a local minimum, or until a maximum number of iterations is reached.

\begin{figure}[!tb]
	\centering
	\includegraphics[width = 0.6\columnwidth]{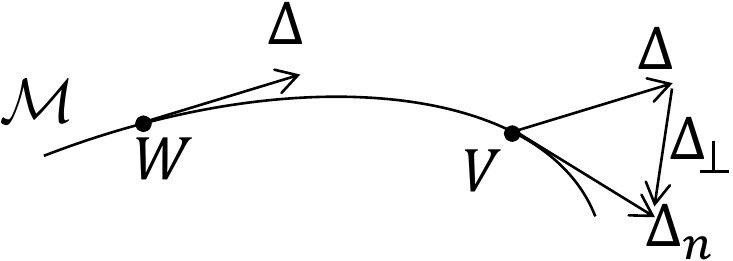}
	\caption{Parallel transport of a tangent vector $\Delta$ from a point $\Mat{W}$ to another point $\Mat{V}$ on the manifold.}
	\label{fig:parallel_transport}	
\end{figure}

It is worth mentioning that optimization techniques on matrix manifolds (\eg, Stiefel, Grassmann) 
are the core of several recent DR schemes~\cite{Cunningham_JMLR_2015,Huanglog_ICML_2015,Huang_2015_CVPR}. 
This is in part due to the availability of the \emph{manopt} package, which makes optimizing over various Riemannian manifolds 
simple and straight-forward~\cite{manopt_package}. As a matter of fact, in our experiments, we used the implementation of the Grassmannian CG method provided 
by \emph{manopt} to obtain $\Mat{W}$. Note that \emph{manopt} also provides other methods, such as trust-region solvers. A full evaluation of these solvers, however, goes beyond the scope of this paper.

\section{Further Discussions}
\label{sec:discussions}

In this section, we discuss several points regarding our DR framework. In particular, 
we discuss the case of the log-Euclidean metric~\cite{Arsigny_2006} and derive a formulation for this metric. Furthermore, we discuss the specific case where the SPD matrices encode Region Covariance Descriptors~\cite{Tuzel_ECCV_2006}.

\subsection{DR with the Log-Euclidean Metric}
\label{subsec:logeuc_dr}
In Section~\ref{sec:DR}, we have developed DR methods on $\SPD{n}$ based on the AIRM and on two Bregman divergences.
Another widely used metric to compare SPD matrices is the log-Euclidean metric defined as
\begin{equation}
	\delta_{lE}(\Mat{X} , \Mat{Y}) = \|\log(\Mat{X}) - \log(\Mat{Y})\|_F\;,
\end{equation}
where $\log(\cdot)$ denotes the matrix principal logarithm. This metric is indeed a true Riemannian metric (for a zero-curvature manifold)
and can be understood as a metric over the flat space that identifies the Lie algebra of an SPD manifold.
Below, we develop a supervised DR method on SPD manifolds similar to the one in Section~\ref{sec:unsupervised_tl}, but using log-Euclidean metric. The adaptation to
the unsupervised scenario introduced in Section~\ref{sec:unsupervised_tl} can easily be derived in a similar manner.

With the log-Euclidean metric,~\eqref{eqn:tl_main_equ} can be rewritten as
\begin{align}
	&\underset{\Mat{W} \in \mathbb{R}^{n \times m}}{\min}\sum_{i,j=1}^p \hspace{-1ex} 
	a(\Mat{X}_i,\Mat{X}_j) \Big \| \log(\Mat{W}^T\Mat{X}_i \Mat{W}) 
	\hspace{-0.5ex} - \hspace{-0.5ex} 	\log(\Mat{W}^T \Mat{X}_j \Mat{W}) \Big \|_F^2, \notag\\
	&\mathrm{s.t.}~ \Mat{W}^T\Mat{W} = \mathbf{I}_m\;. 
	\label{eqn:da_tl_log_euc0}
\end{align}
A difficulty in tackling~\eqref{eqn:da_tl_log_euc0} arises from the fact that an analytic form for the gradient of 
$\| \log(\Mat{W}^T\Mat{X}_i \Mat{W}) - 	\log(\Mat{W}^T \Mat{X}_j \Mat{W}) \Big \|_F^2$ with respect to $\Mat{W}$ 
is not known~\cite{Yger_Arxiv_2015}. To overcome this limitation, we introduce an approximation of $\log(\Mat{W}^T \Mat{X} \Mat{W})$. This approximation relies on the following lemma.

\begin{lemma}
The term $\log(\Mat{W}^T \Mat{X} \Mat{W})$ can be approximated as $\Mat{W}^T \log(\Mat{X}) \Mat{W}$.
\end{lemma}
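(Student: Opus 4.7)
My plan is to start from the power series expansion of the matrix logarithm around the identity (Eq.~\ref{eqn:principal_log_oper}) applied to $\Mat{W}^T \Mat{X} \Mat{W} \in \SPD{m}$, namely
\begin{equation*}
	\log(\Mat{W}^T \Mat{X} \Mat{W}) = \sum_{r=1}^{\infty} \frac{(-1)^{r-1}}{r} \big(\Mat{W}^T \Mat{X} \Mat{W} - \mathbf{I}_m\big)^r\;,
\end{equation*}
rewrite the base of each power using the orthonormality constraint, and then bring the projector $\Mat{W}\Mat{W}^T$ under a controlled approximation to collapse everything into $\Mat{W}^T \log(\Mat{X}) \Mat{W}$.

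First, I would use $\Mat{W}^T \Mat{W} = \mathbf{I}_m$ to write $\mathbf{I}_m = \Mat{W}^T \Mat{W}$ and therefore $\Mat{W}^T \Mat{X} \Mat{W} - \mathbf{I}_m = \Mat{W}^T(\Mat{X} - \mathbf{I}_n)\Mat{W}$. Next, by repeated telescoping, the $r$-th power takes the form
\begin{equation*}
	\big(\Mat{W}^T(\Mat{X} - \mathbf{I}_n)\Mat{W}\big)^r = \Mat{W}^T(\Mat{X}-\mathbf{I}_n)\big[\Mat{W}\Mat{W}^T(\Mat{X}-\mathbf{I}_n)\big]^{r-1}\Mat{W}\;,
\end{equation*}
so the only obstruction to pulling $\Mat{W}^T (\cdot)^r \Mat{W}$ out of the expression is the presence of the projector $\Mat{W}\Mat{W}^T$ between each adjacent pair of $(\Mat{X}-\mathbf{I}_n)$ factors. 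This is precisely where an equality would require $\Mat{W}$ to be square orthogonal (so that $\Mat{W}\Mat{W}^T = \mathbf{I}_n$); since $m<n$, $\Mat{W}\Mat{W}^T$ is only the orthogonal projector onto the column space of $\Mat{W}$, giving rise to the ``approximately equal'' in the lemma.

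Under the approximation $\Mat{W}\Mat{W}^T \approx \mathbf{I}_n$ (valid, for instance, on the range of $\Mat{W}$ or when the columns of $\Mat{W}$ align with the dominant invariant subspace of $\Mat{X}$), each $r$-th power collapses to $\Mat{W}^T (\Mat{X}-\mathbf{I}_n)^r \Mat{W}$, and exchanging the (absolutely convergent, for $\Mat{X}$ in a neighbourhood of $\mathbf{I}_n$) sum with the bracketing by $\Mat{W}^T(\cdot)\Mat{W}$ yields
\begin{equation*}
	\log(\Mat{W}^T \Mat{X} \Mat{W}) \approx \Mat{W}^T \left(\sum_{r=1}^{\infty} \frac{(-1)^{r-1}}{r}(\Mat{X}-\mathbf{I}_n)^r\right) \Mat{W} = \Mat{W}^T \log(\Mat{X}) \Mat{W}\;,
\end{equation*}
which is the claim. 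I would then remark that the result is in fact an identity whenever $m=n$ (using the standard fact that analytic matrix functions commute with orthogonal conjugation), so the lemma is a faithful lifting of that exact relation to the tall orthonormal case.

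\paragraph{Main obstacle.} The delicate point is justifying the step $\Mat{W}\Mat{W}^T \approx \mathbf{I}_n$ inside the series: strictly speaking, this is only exact on $\mathrm{range}(\Mat{W})$, and the discarded part involves $(\mathbf{I}_n - \Mat{W}\Mat{W}^T)(\Mat{X}-\mathbf{I}_n)$ terms whose accumulated contribution through the series one would need to bound. To make the statement rigorous rather than heuristic, I would either (i) restrict to $\Mat{X}$ whose top $m$ eigenspaces are well-aligned with $\mathrm{range}(\Mat{W})$ and bound the residual by the corresponding principal angles, or (ii) treat the approximation as a first-order truncation (keeping only $r=1$, for which the identity is exact) and absorb higher-order terms into an $O(\|\Mat{X}-\mathbf{I}_n\|^2)$ remainder. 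Given that the lemma is invoked only to make the log-Euclidean cost tractable, the looser heuristic justification above is likely what the authors have in mind.
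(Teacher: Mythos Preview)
Your argument is correct (at the heuristic level the lemma is stated), but it proceeds differently from the paper. The paper does not keep the full series and approximate the projector; instead it applies the first-order truncation of the Mercator series \emph{twice}. Concretely, it writes
\[
\log(\Mat{W}^T\Mat{X}\Mat{W}) \;\approx\; \Mat{W}^T\Mat{X}\Mat{W} - \mathbf{I}_m \;=\; \Mat{W}^T(\Mat{X}-\mathbf{I}_n)\Mat{W} \;\approx\; \Mat{W}^T\log(\Mat{X})\Mat{W},
\]
where the first and last steps each drop all terms with $r\geq 2$, and the middle equality is the exact orthonormality identity you also use. This is precisely your fallback option~(ii): at $r=1$ the projector never appears, so no $\Mat{W}\Mat{W}^T\approx\mathbf{I}_n$ step is needed.

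Your primary route---retaining all orders and isolating the error in the replacement $\Mat{W}\Mat{W}^T\to\mathbf{I}_n$ between successive factors---is more informative: it pinpoints the obstruction as the off-range component $(\mathbf{I}_n-\Mat{W}\Mat{W}^T)(\Mat{X}-\mathbf{I}_n)$ and makes clear that the approximation becomes exact when $m=n$ or when $\mathrm{range}(\Mat{W})$ is an invariant subspace of $\Mat{X}$. The paper's argument is cruder but shorter, and since the lemma is only meant to motivate a tractable surrogate cost, either level of justification suffices.
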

\begin{proof}
Note that the Taylor expansion of $\log (\mathbf{I}_n - \Mat{A})$ is given by~\cite{Cheng_SIAM_2001},
\begin{equation}
\log (\mathbf{I}_n - \Mat{A}) = -\Mat{A} - \frac{1}{2}\Mat{A}^2 - \frac{1}{3}\Mat{A}^3 - \cdots.
\label{eq:taylor_log}
\end{equation}
Therefore, we can write
\begin{align*}
\log(\Mat{W}^T \Mat{X} \Mat{W}) &= \log(\mathbf{I}_n - (\mathbf{I}_n - \Mat{W}^T \Mat{X} \Mat{W}))\\
&\approx -(\mathbf{I}_n - \Mat{W}^T \Mat{X} \Mat{W})) = -\Mat{W}^T (\mathbf{I}_n - \Mat{X} )\Mat{W}\\
&\approx \Mat{W}^T \log(\Mat{X}) \Mat{W},
\end{align*}
where both the second and third lines make use of the first order Taylor approximation from Eq.~\ref{eq:taylor_log}.
\end{proof}

From the lemma above, we can cast~\eqref{eqn:da_tl_log_euc0} into the optimization problem
\begin{align}
	&\underset{\Mat{W} \in \mathbb{R}^{n \times m}}{\min} \sum_{i,j=1}^p \hspace{-1ex} a(\Mat{X}_i,\Mat{X}_j)
	\Big \| \Mat{W}^T \hspace{-0.5ex} \log(\Mat{X}_i)\Mat{W} \hspace{-0.5ex} - \hspace{-0.5ex}	\Mat{W}^T \hspace{-0.5ex} \log(\Mat{X}_j)\Mat{W} \Big \|_F^2, \notag\\
	&\mathrm{s.t.}~ \Mat{W}^T\Mat{W} = \mathbf{I}_m\;. 
	\label{eqn:da_tl_log_euc1}
\end{align}

The objective function of~\eqref{eqn:da_tl_log_euc1} can then be written as
\begin{align*}
	&\sum_{i,j=1}^p a(\Mat{X}_i,\Mat{X}_j) \Big \| \Mat{W}^T \log(\Mat{X}_i)\Mat{W} 
	- 	\Mat{W}^T \log(\Mat{X}_j)\Mat{W} \Big \|_F^2\\ 
	&= \tr{ \Big( \Mat{W}^T \Mat{F}(\Mat{W}) \Mat{W} \Big)} \;, 
\end{align*}
with
\begin{align}
	\Mat{F}(\Mat{W}) = \sum_{i,j=1}^p &a(\Mat{X}_i,\Mat{X}_j) \Big( \log(\Mat{X}_i) - \log(\Mat{X}_j) \Big) \Mat{W}\Mat{W}^T \times \notag\\
	&\Big(\log(\Mat{X}_i) - \log(\Mat{X}_j) \Big)\;, 
	\label{eqn:F_W_log}
\end{align}
which yields the optimization problem
\begin{align}
	&\underset{\Mat{W} \in \mathbb{R}^{n \times m}}{\min} \tr{ \Big( \Mat{W}^T \Mat{F}(\Mat{W}) \Mat{W} \Big)}, \notag\\
	&\mathrm{s.t.}~ \Mat{W}^T\Mat{W} = \mathbf{I}_m\;. 
	\label{eqn:da_tl_log_euc2}
\end{align}

We note that $\tr{ \big( \Mat{W}^T \Mat{F}(\Mat{W}) \Mat{W} \big)}$ is invariant to the action of the orthogonal group, \ie, 
changing $\Mat{W}$ with $\Mat{W}\Mat{R}$ for $\Mat{R} \in \mathrm{O}(m)$ does not change the value of the trace.
As such, in principle,~\eqref{eqn:da_tl_log_euc2} is a problem on $\GRASS{m}{n}$ and can be optimized in a similar manner as discussed before. In particular, to perform Newton-type methods on the Grassmannian, the required gradient is given by
\begin{align*}
&D_\Mat{W}\tr{ \Big( \Mat{W}^T \Mat{F}(\Mat{W}) \Mat{W} \Big)} = 
4 \sum_{i,j=1}^p a(\Mat{X}_i,\Mat{X}_j) \times\\
&\Big(\log(\Mat{X}_i) - \log(\Mat{X}_j) \Big) \Mat{W}\Mat{W}^T \Big(\log(\Mat{X}_i) - \log(\Mat{X}_j) \Big) \Mat{W}\;.
\end{align*}

While optimization on the Grassmannian can indeed be employed to solve~\eqref{eqn:da_tl_log_euc2}, here, we propose a faster alternative which relies 
on eigen-decomposition. To this end, we follow an iterative two-stage procedure. First, we fix $\Mat{F}(\Mat{W})$ (\ie, assume that it does not on $\Mat{W}$), and compute the solution of the resulting approximation of~\eqref{eqn:da_tl_log_euc2}, which can be achieved by computing the $m$ smallest eigenvectors of $\Mat{F}(\Mat{W})$~\cite{Kokiopoulou_2011}. Given the new $\Mat{W}$, we update $\Mat{F}(\Mat{W})$, and iterate. The pseudo-code of this procedure is given in Algorithm~\ref{alg:iterative_eigen_alg}. 

\begin{algorithm}[!tb]		
	\caption{\small Iterative Eigen-Decomposition Solver for log-Euclidean-based Supervised DR.}
	\label{alg:iterative_eigen_alg}
	\begin{algorithmic}
	\STATE {\bfseries Input:}\\
	{A set of SPD matrices $\{ \Mat{X}_i \}_{i=1}^p,~\Mat{X}_i~\in~\SPD{n}$\\ 
	The corresponding labels $\{ y_i \}_{i=1}^p,~y_i \in \{1,2,\cdots,C\}$\\
    The dimensionality $m$ of the induced manifold
	}
	\vspace{0.1cm}
	\STATE {\bfseries Output:}\\
	{The mapping $\Mat{W} \in \mathcal{G}(m,n)$
	}
	\vspace{0.25cm}
	\STATE Generate $a(\Mat{X}_i,\Mat{X}_j)$ using Eq.~\ref{eqn:affinity_Gw_Gb_combined}
	\STATE $\Mat{W} \gets \mathbf{I}_{n\times m}$ (\ie, the truncated identity matrix)
	\REPEAT		
	    \STATE Compute $F(\Mat{W})$ using Eq.~\ref{eqn:F_W_log}
		\STATE $\Mat{W} \gets  m$ smallest eigenvectors of $F(\Mat{W})$.
	\UNTIL{convergence}
	\end{algorithmic}
\end{algorithm} 

Figure~\ref{fig:eigen_cg_comparison} compares the speed and convergence behavior of our iterative eigen-decomposition-based solution 
against the CG method on the Grassmannian. This figure was computed using the MOCAP dataset 
(see Section~\ref{sec:exp_action_rec} for details). 
First, note that the eigen-decomposition solution converges much faster than CG. While CG yields a slightly lower error, our experiments 
show that the eigen-decomposition solver is typically 10 times faster than CG on the Grassmannian, which, we believe, justifies its usage.

\begin{remark}
The recent work of Huang \etal~\cite{Huanglog_ICML_2015} introduced the idea of learning a log-Euclidean metric, which is related to our log-Euclidean-based supervised DR approach.
This work formulates DR  as the problem of finding a positive {\emph semi}-definite matrix $\Mat{Q} \in \SYM{n}$ that maximizes the discriminative power of pairs of samples according to
\begin{align*}
	\delta_{i,j}(\Mat{Q}) = \tr \Big( \Mat{Q} \big(\log(\Mat{X}_i) - \log(\Mat{X}_j) \big) 
	\big(\log(\Mat{X}_i) - \log(\Mat{X}_j) \big)	\Big)	\;. 
\end{align*}
In particular, $\Mat{Q}$ was forced to have rank $m$, and thus identifies a low-dimensional latent space. 
Obtaining $\Mat{Q}$ was then formulated as a \emph{log-det} problem~\cite{Huanglog_ICML_2015}. Our formulation, here, yields a much simpler optimization problem, and will thus be faster.
\end{remark}

\begin{figure}[!tb]		
	\centering
	\includegraphics[width = 0.7 \columnwidth]{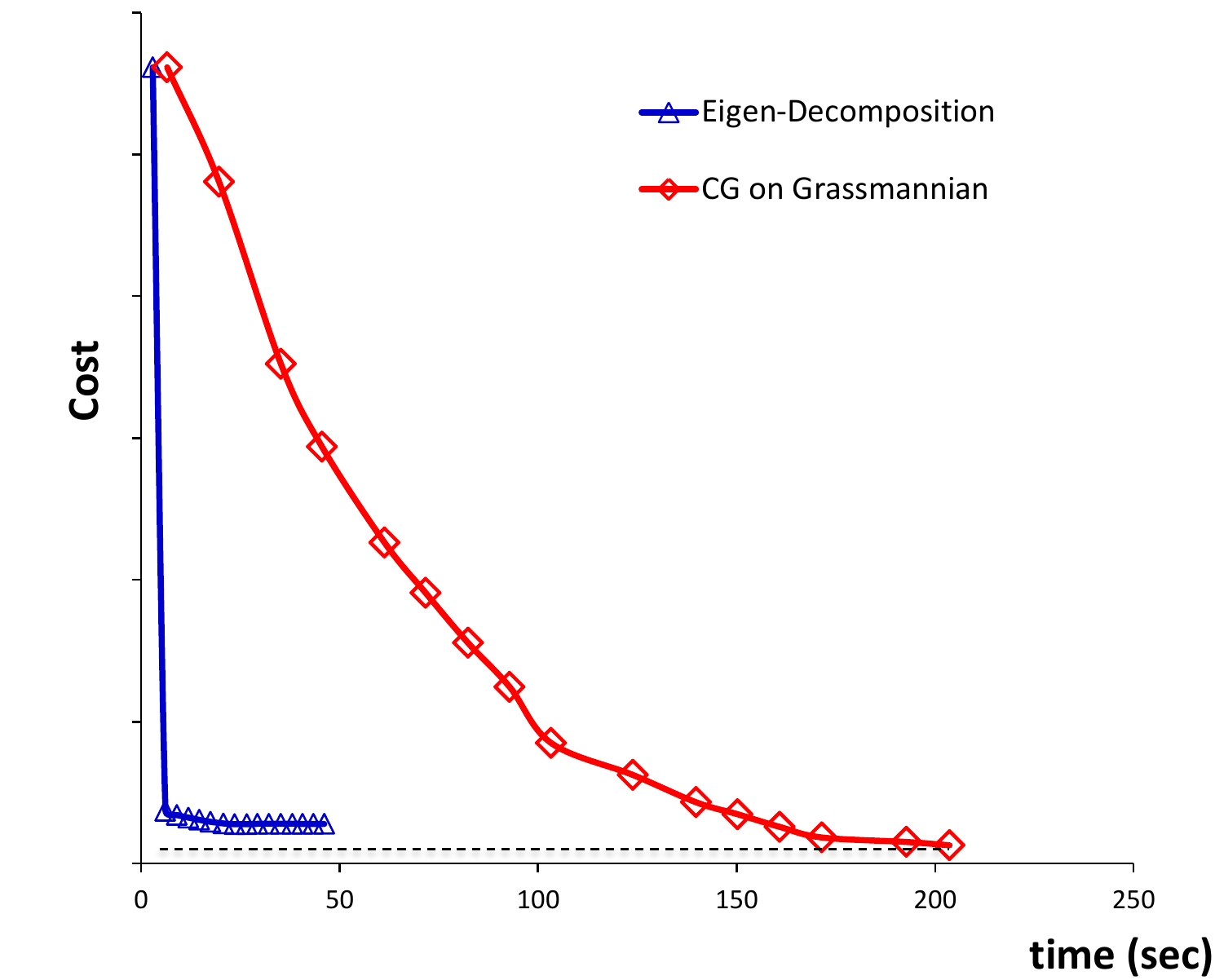}
	\caption{\small Convergence behavior of the proposed Eigen-Decomposition solver against conjugate gradient
	optimization on the Grassmann manifold.}
	\label{fig:eigen_cg_comparison}
\end{figure}

\begin{remark}
Since, in~\eqref{eqn:da_tl_log_euc1}, the $\log$ operation maps the matrices to Euclidean space, one might wonder if we should not apply a traditional vector-based DR approach to the resulting space. Note, however, that our goal is to go from $\SPD{n}$ to $\SPD{m}$. Therefore, optimizing a projection between the corresponding Euclidean spaces would translate to an optimization problem on $\GRASS{\frac{m\times(m+1)}{2}}{\frac{n\times(n+1)}{2}}$. By contrast, our symmetric formulation results in an optimization problem on $\GRASS{m}{n}$, which is clearly less expensive.
\end{remark}

\begin{remark}
From a purely geometrical point of view, we believe that the solutions developed using the AIRM, the Stein divergence and the Jeffrey divergence are more attractive. In particular, these solutions model the nonlinear geometry of the SPD manifold, while the log-Euclidean metric flattens it. Furthermore, in contrast with the log-Euclidean metric, the AIRM, the Stein and the Jeffrey divergences are invariant to affine transformations. We acknowledge, however, that the log-Euclidean metric has been shown to be a useful substitute to the AIRM in several applications (\eg,~\cite{Arsigny_2006,Wang_CVPR_2012_CDL,Carreira_TPAMI_15}).  
\end{remark}

\begin{remark}
Since the Frobenius norm also belongs to the family of Bregman divergences (with $\zeta(\Mat{X}) = \|\Mat{X}\|_F^2$), 
one could derive supervised/unsupervised DR formulations using $\|\cdot\|_F^2$ as similarity measure. For example, 
in the supervised case, this would translate to solving
\begin{align}
	&\underset{\Mat{W} \in \mathbb{R}^{n \times m}}{\min}~~ \sum_{i,j=1}^p \hspace{-1ex} a(\Mat{X}_i,\Mat{X}_j)
	\Big \| \Mat{W}^T \hspace{-0.5ex} \Mat{X}_i\Mat{W} \hspace{-0.5ex} - \hspace{-0.5ex}	\Mat{W}^T \hspace{-0.5ex} \Mat{X}_j\Mat{W} \Big \|_F^2, \notag\\
	&\mathrm{s.t.}~ \Mat{W}^T\Mat{W} = \mathbf{I}_m\;. 
	\label{eqn:da_tl_frob}
\end{align}
This can easily be rewritten in the form of~\eqref{eqn:da_tl_log_euc2}, but where now
$\Mat{F}(\Mat{W})$ has the 
\begin{align*}
	\Mat{F}(\Mat{W}) = \sum_{i,j=1}^p a(\Mat{X}_i,\Mat{X}_j) \Big(\Mat{X}_i - \Mat{X}_j \Big)\Mat{W}\Mat{W}^T 	
	\Big(\Mat{X}_i - \Mat{X}_j \Big)\;. 
\end{align*}
\end{remark}
Therefore, our previous eigen-decomposition solution directly applies here.

\subsection{Region Covariance Descriptors}
\label{sec:dr_covd}	

When it comes to the SPD matrices used in our experiments, we exploited Region Covariance Matrices (RCMs)~\cite{Tuzel_ECCV_2006} as image descriptors. Here, we discuss some interesting properties of our algorithm when applied to these specific SPD matrices.

There are several reasons why RCMs are attractive to represent images and videos. First, RCMs provide a natural way to fuse various feature types. Second, they help reducing the impact of noisy samples in a region via their inherent averaging operation. Third, RCMs are independent of the size of the region, and can therefore easily be utilized to compare regions of different sizes. Finally, RCMs can be efficiently computed using integral images~\cite{Tuzel_PAMI_2008,Sanin_WACV_2013}.
% (see ~\cite{Tuzel_PAMI_2008} for 2D integral images and ~\cite{Sanin_WACV_2013} for 3D ones). 

Let $I$ be a $W \times H$ image, and $\mathbb{O} = \{\Vec{o}_i\}_{i=1}^{r}, \; \Vec{o}_i \in \mathbb{R}^n$ be a set of $r$ observations extracted from $I$, \eg, $\Vec{o}_i$ concatenates intensity values, gradients along the horizontal and vertical directions, filter responses,... for image pixel $i$. Let $\mu = \frac{1}{r} \sum_{i = 1}^{r} \Vec{o}_i$ be the mean value of the observations.
Then, image $I$ can be represented by the $n \times n$ RCM 
\begin{align}
	\Mat{C}_{I} = \frac{1}{r-1} \sum_{i = 1}^{r} \left(\Vec{o}_i - \mu \right)\left(\Vec{o}_i - \mu \right)^T 
	= \Mat{O}\Mat{J}\Mat{J}^T\Mat{O}^T\;, 
	\label{eqn:cov_desc}
\end{align}
where $\Mat{J} = {r}^{-3/2}(r\mathbf{I}_{r} - \Mat{1}_{r \times r})$.
To have a valid RCM, $r \geq n$, otherwise $\Mat{C}_{I}$ would have zero eigenvalues, which would make both $\delta_g^2$ 
and $\delta_S^2$ indefinite. 

After learning the projection $\Mat{W}$, the low-dimensional representation of image $I$ is
given by $\Mat{W}^T\Mat{O}\Mat{J}\Mat{J}^T\Mat{O}^T\Mat{W}$.
This reveals two interesting properties of our learning scheme. {\bf 1)} The resulting representation can also be thought of as an RCM with $\Mat{W}^T\Mat{O}$ 
as a set of low-dimensional observations. Hence, in our framework, we can create a valid $\SPD{m}$ manifold with only $m$ observations instead of at least $n$ in the original input space. This is not the case for other algorithms, which require having matrices on $\SPD{n}$ as input.  {\bf 2)} Applying $\Mat{W}$ directly the set of observations reduces the computation time of creating the final RCM on $\SPD{m}$. This is due to the fact that the computational complexity of computing an RCM is quadratic in the dimensionality of the features. 

\section{Related Work}
\label{sec:related_work}

In this section, we review the methods that have exploited notions of Riemannian geometry for DR. In contrast with our approach that goes from one high-dimensional SPD manifold to a lower-dimensional manifold, most of the literature has focused on going from a manifold to Euclidean space.

In this context, a popular approach consists of flattening the manifold via its tangent space.
The best-known example of such an approach is Principal Geodesic Analysis (PGA)~\cite{Fletcher_CVPR_2003,Fletcher_2004_PGA}. PGA and its variants such as~\cite{Said_EUSIPCO_2007,Huckemann_2010,Sommer_ECCV_2010} have been successfully employed for various application, such as analyzing vertebrae outlines~\cite{Sommer_CVPRW_2009} and motion capture data~\cite{Said_EUSIPCO_2007}. PGA can be understood as a generalization of PCA to Riemannian manifolds. To this end, the widely-used 
formulation proposed in~\cite{Fletcher_2004_PGA} identifies the tangent space whose 
corresponding subspace maximizes the variability of the data on the manifold. PGA, however, is equivalent to flattening the Riemannian manifold by taking its tangent space at the Karcher, or Fr\'{e}chet, mean of the data. As such, it does not fully exploit the structure of the manifold. Furthermore, PGA, as PCA, cannot exploit the availability of class labels, and may therefore be sub-optimal for classification. 

Another recent popular trend consists of embedding the manifold to an RKHS to perform DR.
In particular,~\cite{Sadeep_CVPR_2013} relied on kernel PCA and~\cite{Wang_CVPR_2012_CDL} on
kernel Partial Least Squares (kPLS) and kernel Linear Discriminant Analysis (LDA) to achieve this goal.
Embedding the manifold to an RKHS inherently requires a p.d. kernel. While significant progress has been made 
in identifying p.d. kernels on Riemannian manifolds~\cite{Sadeep_CVPR_2013,Li_ICCV_2013,Feragen_2014}, our knowledge is still limited in this regard. 
For example and in the case of SPD manifolds,
the kernel employed in~\cite{Wang_CVPR_2012_CDL} is a linear kernel on the identity tangent space of $\SPD{n}$. 
In~\cite{Sadeep_CVPR_2013}, the best performing kernel corresponds to the Gaussian kernel defined on 
the identity tangent space of $\SPD{n}$. Therefore, in a very strict sense, the true structure of the manifold is not used by either of these works. As a matter of fact, it was recently shown that Gaussian kernels cannot preserve the geodesic distances on non-flat manifold~\cite{Feragen_2014}. 

In contrast to the previous methods, which flatten the manifold, via either a tangent space, or an RKHS,~\cite{Goh_CVPR_2008} directly employs notions of Riemannian geometry to perform nonlinear DR.
In particular,~\cite{Goh_CVPR_2008} extends several nonlinear DR techniques, such as Locally Linear Embedding (LLE), Hessian LLE and Laplacian Eigenmaps, to their Riemannian counterparts. 
Take for example the case of LLE~\cite{Saul_JMLR_2003}. Given a set of vectors $\{\Vec{x}_i\}_{i = 1}^m, \Vec{x}_i \in \mathbb{R}^D$, the LLE algorithm determines a weight matrix 
$\Mat{W} \in \mathbb{R}^{m \times m}$  which minimizes a notion of reconstruction error on  $\{\Vec{x}_i\}_{i = 1}^m$. 
Once the weight matrix $\Mat{W}$ is determined, the algorithm 
embeds $\{\Vec{x}_i\}_{i = 1}^m$ into  a lower dimensional space $\mathbb{R}^d, d < D$ where the neighbouring properties of 
$\{\Vec{x}_i\}_{i = 1}^m$ are preserved. The neighbouring properties are encoded by $\Mat{W}$ and the embedding takes the form of an 
eigen-decomposition in the end. As shown in~\cite{Goh_CVPR_2008}, the construction of $\Mat{W}$ 
can be generalized to the case of an arbitrary Riemannian manifold $\mathcal{M}$ by using the logarithm map.
Hence,  for a given set of points on $\mathcal{M}$, an embedding from $\mathcal{M} \to \mathbb{R}^d$ can be obtained once $\Mat{W}$ is appropriately constructed. In~\cite{Goh_CVPR_2008}, the authors showed on several clustering problems on $\mathcal{M}$ that the embedded data was more discriminative than the original one. In principle, the Riemannian extension of LLE (and of the other nonlinear DR algorithms discussed in~\cite{Goh_CVPR_2008}) can also be applied to classification problems. However, they are limited to the transductive setting since they do not define any parametric mapping to the low-dimensional space.

In contrast to the previous methods, whose learned mappings are to Euclidean space, a few recent techniques have studied the case of mapping between two manifolds of different dimensions. To this end, these techniques have also made use of the bilinear form of Eq.~\ref{eqn:generic_mapping}. In~\cite{Wang_2011_CCA}, a mapping between covariance matrices of different dimensions was learnt, but by ignoring the Riemannian geometry of the SPD manifold. More recently, and probably inspired by our preliminary study~\cite{Harandi_ECCV_2014}, this bilinear form was employed to perform DR on the SPD manifold and on the Grassmannian by exploiting notions of Riemannian geometry~\cite{Huanglog_ICML_2015,Huang_2015_CVPR,Yger_Arxiv_2015}. We also acknowledge that the work of Xu \etal~\cite{Xu_TCSVT_2015} is somehow relevant to the log-Euclidean development done in~\textsection\ref{subsec:logeuc_dr}. However, in contrast to our proposal, 
in~\cite{Xu_TCSVT_2015} authors did not impose an orthogonality constraint on $\Mat{W}$. 

Finally, concepts of Riemannian geometry have also been exploited in the context of DR in Euclidean space. For instance, 
Lin and Zha~\cite{RML_PAMI_2008} exploit the idea that the input (Euclidean) data lies
on a low-dimensional Riemannian manifold. Recently, Cunningham and Ghahramani~\cite{Cunningham_JMLR_2015} revisited linear DR techniques and analyzed them using the geometry of Stiefel manifolds.

\section{Empirical Evaluation}
\label{sec:experiments}

We now evaluate our different SPD-based DR methods on several problems. We first consider the supervised scenario and present results on image and video classification tasks. 
We then turn to evaluating our unsupervised techniques for clustering on SPD manifolds.
In all our experiments, the dimensionality of the low-dimensional SPD manifold was determined by cross-validation. 

\subsection{Image/Video Classification}
\label{subsec:classification_results}

The supervised SPD-DR algorithm introduced in Section~\ref{sec:supervised_dr} allows us to obtain a low-dimensional, more discriminative SPD manifold from a high-dimensional one. Many different classifiers can then be used to categorize the data on this new manifold. In our experiments, we make use of two such classifiers. First, we employ a simple nearest neighbour classifier based on the manifold metric (AIRM, $S$ or $J$ divergence). This simple classifier clearly evidences the benefits of mapping the original Riemannian structure to a lower-dimensional one. Second, we make use of the Riemannian sparse coding algorithm of~\cite{Harandi_TNNLS_2015}. This algorithm exploits the notion of sparse coding to represent a query SPD matrix using a codebook of SPD matrices. In all our experiments, we formed the codebook purely from the training data, \ie, no dictionary learning was employed. Note that this algorithm relies on a kernel derived from either the $S$ divergence
or the log-Euclidean metric. 
We refer to the different algorithms evaluated in our experiments as:

\begin{itemize}
	\renewcommand{\labelitemi}{\scriptsize$$}
	\item \textbf{NN-AIRM:} AIRM-based Nearest Neighbour classifier.
	\item \textbf{NN-S:} $S$ divergence-based Nearest Neighbour classifier.	
	\item \textbf{NN-J:} $J$ divergence-based Nearest Neighbour classifier.
	\item \textbf{NN-lE:} log-Euclidean-based Nearest Neighbour classifier.
	\item \textbf{NN-AIRM-DR:} AIRM-based Nearest Neighbour classifier on the low-dimensional SPD manifold obtained with our approach.	
	\item \textbf{NN-S-DR:} $S$ divergence-based Nearest Neighbour classifier on the low-dimensional SPD manifold obtained with our approach.
	\item \textbf{NN-J-DR:} $J$ divergence-based Nearest Neighbour classifier on the low-dimensional SPD manifold obtained with our approach.
	\item \textbf{NN-lE-DR:} log-Euclidean-based Nearest Neighbour classifier on the low-dimensional SPD manifold obtained with our approach.
	\item \textbf{kSC-S:} kernel sparse coding~\cite{Harandi_CVPR_2015} using the $S$ divergence on the high-dimensional SPD manifold.
	%\item \textbf{kSC-J:} kernel sparse coding using the $J$ divergence on the high-dimensional SPD manifold.
	\item \textbf{kSC-lE:} kernel sparse coding using the log-Euclidean metric on the high-dimensional SPD manifold.
	\item \textbf{kSC-S-DR:} kernel sparse coding using the $S$ divergence on the low-dimensional SPD manifold obtained with our approach.
	%\item \textbf{kSC-J-DR:} kernel sparse coding using the $J$ divergence on the low-dimensional SPD manifold obtained with our approach.
	\item \textbf{kSC-lE-DR:} kernel sparse coding using the log-Euclidean metric on the low-dimensional SPD manifold obtained with our approach.
\end{itemize}

In addition to these methods, we also provide the results of the PLS-based Covariance Discriminant Learning (CDL) technique 
of~\cite{Wang_CVPR_2012_CDL}, as well as of the state-of-the-art baselines of each specific dataset.

In practice, to define the affinity function (see Section~\ref{sec:supervised_dr}), we set $\nu_w$ to the minimum number of points in each class and, to balance the influence of $g_w(\cdot,\cdot)$ and $g_b(\cdot,\cdot)$, choose $\nu_b \leq \nu_w$, with the specific value found by cross-validation. 

\subsubsection{Material Categorization}
\label{sec:exp_material_cat}

For the task of material categorization, we used the UIUC dataset~\cite{UIUC_Dataset}. 
The UIUC material dataset contains 18 subcategories of materials taken in the wild from 
four general categories (see Fig.~\ref{fig:UIUC_Dataset}): \textit{bark}, \textit{fabric}, \textit{construction materials}, and 
\textit{outer coat of animals}.
Each subcategory has 12 images taken at various scales. 
Following standard practice, half of the images from each subcategory was randomly chosen as training data, and the rest was 
used for testing. We report the average accuracy over 10 different random partitions.

Small RCMs, such as those used for texture recognition in~\cite{Tuzel_ECCV_2006}, are hopeless here due to the complexity of the task.
Recently, SIFT features~\cite{SIFT_IJCV_2004} have been shown to be 
robust and discriminative for material classification~\cite{UIUC_Dataset}. Therefore, we constructed RCMs of size $155\times155$
using 128 dimensional SIFT features (from grayscale images) and 27 dimensional color descriptors. 
To this end, we resized all the images to $400 \times 400$ and computed dense SIFT descriptors on a regular grid with 4 pixels spacing. The color descriptors were obtained by simply stacking colors from $3\times3$ patches centered at the grid points. Each grid point therefore yields one 155-dimensional observation $\Vec{o}_i$ in Eq.~\ref{eqn:cov_desc}. The parameters for this experiments were set to $\nu_w = 6$ (minimum number of samples in a class), $m = 20$ and $\nu_b = 3$ obtained by 5-fold cross-validation.
%\TODO{What about the dimensionality of the latent space?}

Table~\ref{tab:table_UIUC_performance} compares the performance of the studied algorithms. The performance of 
the state-of-the-art method on this dataset was reported to be 43.5\%~\cite{UIUC_Dataset}. The results show that appropriate manifold-based methods (\ie, kSC-S and CDL) with the original $155 \times 155$ RCMs already outperform this state-of-the-art, while nearest neighbour 
(\eg, NN-AIRM, NN-S) on the same manifold yields worse performance. However, after applying our learning algorithm, NN not only outperforms 
state-of-the-art significantly, but also outperforms both CDL and kSC, except for the log-Euclidean solution. 
For example, kSC using the $S$ divergence is boosted by near than 14\% by dimensionality reduction. 
The maximum accuracy of $66.6\%$ is obtained by kernel sparse coding on the learned SPD manifold (kSC-S-DR).

\def \UIUC_SCALE {0.19}
\begin{figure}[!t]		
	\includegraphics[width = \UIUC_SCALE \columnwidth]{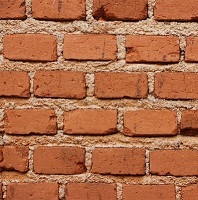}
	\includegraphics[width = \UIUC_SCALE \columnwidth]{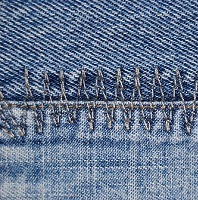}
	\includegraphics[width = \UIUC_SCALE \columnwidth]{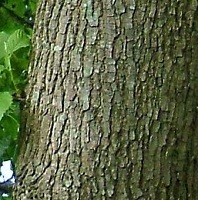}
	\includegraphics[width = \UIUC_SCALE \columnwidth]{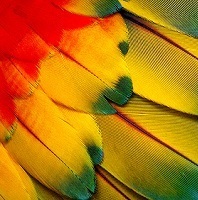}
	\includegraphics[width = \UIUC_SCALE \columnwidth]{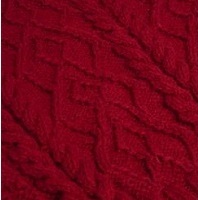}
	\caption{Samples from the UIUC material dataset~\cite{UIUC_Dataset}.}
	\label{fig:UIUC_Dataset}
\end{figure}

\begin{table*}
\scriptsize
\begin{minipage}[t]{0.3 \textwidth}
	\centering
   	\begin{tabular}{lc}
   		\toprule
    	{\bf Method} &{\bf Accuracy }\\
   		\toprule  		
    	{\bf CDL	}          		 	 &$52.3\% \pm 4.3$\\
   		\midrule
   		{\bf NN-AIRM}                         	 				 &$35.6\% \pm 2.6$\\
    	{\bf NN-AIRM-DR}                           				 &$58.3\% \pm 2.3$\\
   		\midrule
   		{\bf NN-S}                        	 					 &$35.8\% \pm 2.6$\\
    	{\bf NN-S-DR}	                          				 &$58.1\% \pm 2.8$\\
   		{\bf kSC-S	}        	 	 							 &$52.8\% \pm 2.1$\\  
   		{\bf kSC-S-DR}                         	 				 &$\bf{66.6\% \pm 3.1}$\\  		
   		\midrule
		{\bf NN-J}                        	 					 &$30.9\% \pm 2.4$\\
    	{\bf NN-J-DR}                          					 &$53.4\% \pm 2.9$\\
		\midrule
		{\bf NN-lE}                        	 					 &$36.7\% \pm 2.8$\\
    	{\bf NN-lE-DR}                          				 &$51.2\% \pm 3.0$\\
    	{\bf kSC-lE}        	 	 							 &$57.7\% \pm 4.2$\\
    	{\bf kSC-lE-DR}                     	 				 &$63.9\% \pm 4.3$\\		
   		\bottomrule	
	    \end{tabular}    	
   		\caption   {Recognition accuracies for the UIUC material dataset~\cite{UIUC_Dataset}.}	
   		\label{tab:table_UIUC_performance} 
\end{minipage}
\quad
\begin{minipage}[t]{0.3 \textwidth}
	\centering
	\begin{tabular}{lc}
		\toprule
    	{\bf Method} &{\bf Accuracy }\\
		\toprule  		
    	{\bf CDL}    					 &$79.8\%$\\		
    	\midrule
		{\bf NN-AIRM}                              				 &$62.8\%$\\
	   	{\bf NN-AIRM-DR}                           				 &$67.6\%$\\
	   	\midrule
    	{\bf NN-S}      	                         			 &$61.7\%$\\
		{\bf NN-S-DR}	                            			 &$68.6\%$\\
		{\bf kSC-S	}							        	 	 &$76.1\%$\\  
		{\bf kSC-S-DR}                         	 				 &$\bf{81.9\%}$\\
		\midrule
		{\bf NN-J}                        	 					 &$69.2\% $\\
    	{\bf NN-J-DR}                          					 &$71.8\% $\\
    	\midrule
    	{\bf NN-lE}                        	 					 &$69.7\% $\\
    	{\bf NN-lE-DR}                          				 &$71.3\% $\\
    	{\bf kSC-lE	}							        	 	 &$75.5\% $\\
    	{\bf kSC-lE-DR}                     					 &$78.7\% $\\                        	 			 
		\bottomrule	
	\end{tabular}
	\caption    { Recognition accuracies for the HDM05-MOCAP dataset~\cite{HDM05_Doc}.}	 	
	\label{tab:table_HDM05_performance}		   	
\end{minipage}	
\quad
\begin{minipage}[t]{0.3 \textwidth}
	\centering
	\begin{tabular}{lc}
		\toprule
    	{\bf Method} &{\bf Accuracy }\\
		\toprule  		
    	{\bf CDL}    					 &$70.9\%$\\		
    	\midrule
		{\bf NN-AIRM}                              				 &$64.7\%$\\
	   	{\bf NN-AIRM-DR}                           				 &$75.7\%$\\
	   	\midrule
    	{\bf NN-S}      	                         			 &$45.4\%$\\
		{\bf NN-S-DR}	                            			 &$72.8\%$\\
		{\bf kSC-S}        	 	 								 &$78.0\%$\\ 
		{\bf kSC-S-DR}                         	 				 &$\bf{80.1\%}$\\
		\midrule
		{\bf NN-J}                        	 					 &$62.0\% $\\
    	{\bf NN-J-DR}                          					 &$68.9\% $\\
    	\midrule                     	 			 
		{\bf NN-lE}                        	 					 &$39.8\% $\\
    	{\bf NN-lE-DR}                          				 &$55.0\% $\\
    	{\bf kSC-lE}        	 	 							 &$73.5\% $\\
    	{\bf kSC-lE-DR}                     	 				 &$78.8\% $\\	
		\bottomrule	
	\end{tabular}
	\caption    { Recognition accuracies for the YTC dataset~\cite{YT_Celebrity}.}	 	
	\label{tab:table_YTC_performance}		   	
\end{minipage}
\end{table*}

\subsubsection{Action Recognition from Motion Capture Data}
\label{sec:exp_action_rec}

As a second experiment, we tackled the problem of human action recognition from motion capture sequences using the HDM05 database~\cite{HDM05_Doc}. This database contains the following 14 actions: `clap above head', `deposit floor', `elbow to knee', `grab high', `hop both legs', `jog', `kick forward', `lie down floor', `rotate both arms backward', `sit down chair', `sneak', `squat', `stand up lie' and `throw basketball' (see Fig.~\ref{fig:MoCap_Dataset} for an example).
The dataset provides the 3D locations of 31 joints over time acquired at the speed of 120 frames per second.
We describe an action of a $K$ joints skeleton observed over $m$ frames by its joint covariance descriptor~\cite{Husse_IJCAI_2013}, which is an SPD matrix of size $3K \times 3K$. 
More specifically, let $x_i(t)$, $y_i(t)$ and $z_i(t)$ be the $x$, $y$, and $z$ coordinates of the $i^{th}$ joint at frame
$t$. Let $\Vec{s}(t)$ be the vector of all joint locations at time $t$, \ie, $\Vec{s}(t) = \big(x_1(t), \cdots, x_K(t),y_1(t),\cdots, y_K(t),z_1(t),  \cdots, z_K(t)\big)^T$, which has $3K$ elements.
The SPD matrix describing an action occurring over $\tau$ frames is then taken as the covariance of the vectors 
$\Vec{s}(t)\;,\; 1\leq t\leq \tau$.

In our experiments, we used 2 subjects for training (\ie, 'bd' and 'mm') and the remaining 3 subjects for testing (\ie, 'bk', 'dg' and 'tr')\footnote{Note that this differs from the setup in~\cite{Husse_IJCAI_2013}, where 3 subjects were used for training and 2 for testing. However, with the setup of~\cite{Husse_IJCAI_2013} where an accuracy of $95.41\%$ was reported, all our algorithms gave about $99\%$ accuracy, which made it impossible to compare them.}. This resulted in 118 training and 188 test sequences for this experiment. The parameters of our method were set to $\nu_w = 5$ (minimum number of samples in one class), $m = 65$ and $\nu_b = 5$ by cross-validation.

We report the performance of the different methods on this dataset in Table~\ref{tab:table_HDM05_performance}. 
Again we can see that the accuracies of NN and kSC are significantly improved by our learning algorithm, and that the kSC-S-DR 
approach achieves the best accuracy of $81.9\%$.

\def \MOCAP_SCALE  {0.7}
\begin{figure}[!tb]
	\centering
	\includegraphics[width = \MOCAP_SCALE \columnwidth]{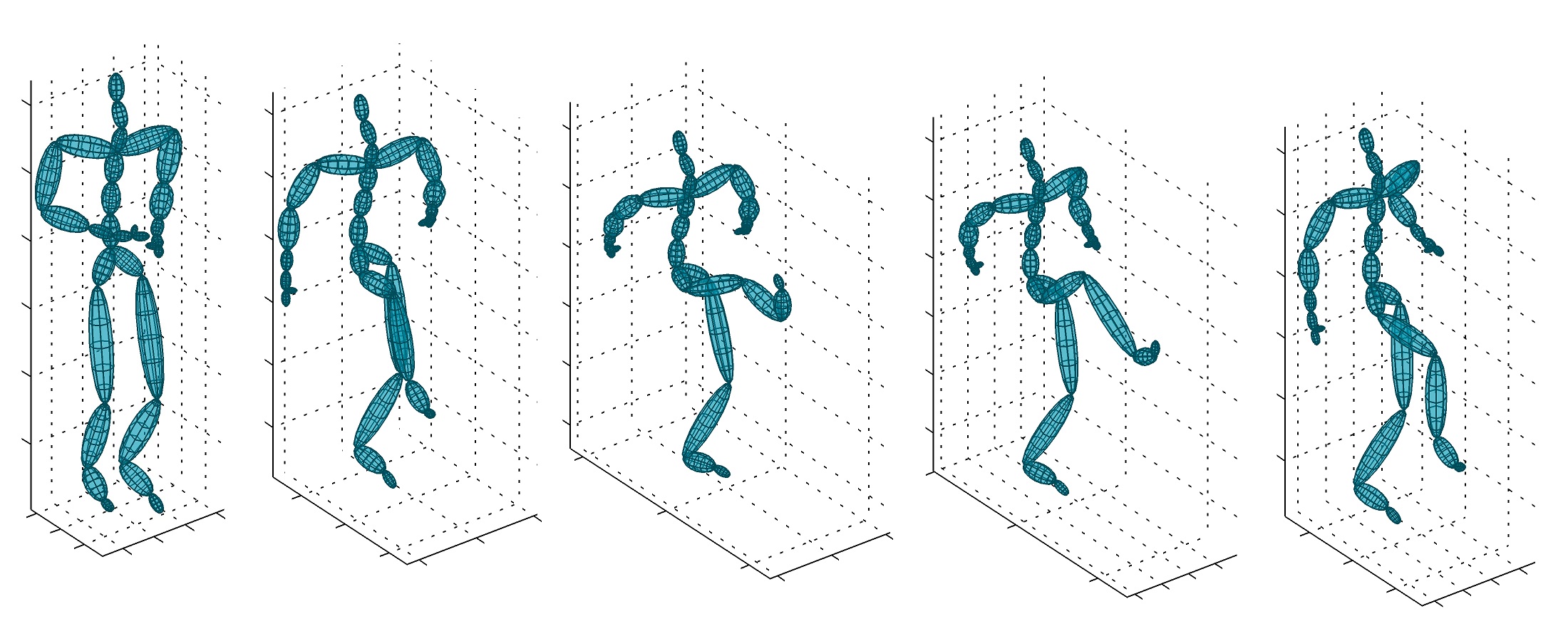}
	\caption{Kicking action from the HDM05 motion capture sequences database~\cite{HDM05_Doc}.}
	\label{fig:MoCap_Dataset}
\end{figure}

\subsubsection{Face Recognition}
\label{sec:exp_face_rec}

We then used the YTC dataset~\cite{YT_Celebrity} for the task of image-set-based face recognition. 
The YTC dataset contains 1910 video clips of 47 subjects. See Fig.~\ref{fig:YT_Celebrity_example} for samples from YTC. 
We used face regions extracted from the videos and resized them to $64 \times 64$. From each frame in a video, we then extracted 
4 histograms of Local Binary Patterns (LBP)~\cite{LBP_PAMI_2002}, each obtained from a $32 \times 32$ sub-region of the frame.
By concatenating the LBP histograms, frame $i$ in a video is described by $\Vec{o}_i$, a $232$-dimensional vector.
A video is then described by one SPD matrix of the form 
\begin{equation}
\Mat{C} = 
\begin{bmatrix}
\Mat{O}\Mat{O}^T + \mu \mu^T &\mu\\
\mu^T &1
\end{bmatrix}\;,
\label{eqn:rcm_ytc}
\end{equation}
where $\mathbb{R}^{232 \times r} \ni \Mat{O} = [\Vec{o}_1,\Vec{o}_2,\cdots,\Vec{o}_r]$ 
is a matrix storing the descriptors of all $m$ frames of a video and $\mu = \frac{1}{m} \sum_{i=1}^m\Vec{o}_i$.
Following the standard practice~\cite{Lu_ICCV_2013}, 3 videos from each person
were randomly chosen as training/gallery data, and the query set contained 6 randomly chosen videos from each subject. 
The process of random selection was repeated 5 times.

In Table~\ref{tab:table_YTC_performance}, we compare the performance of all the studied algorithm. To the best of our knowledge,
the highest reported accuracy using holistic descriptors (\ie, one descriptor per video) is $78.2\%$~\cite{Lu_ICCV_2013}. 
Both kSC methods after dimensionality reduction outperform this result, with kSC-S-DR achieving the maximum performance of $80.1\%$.
Note also that our DR scheme significantly boosts the performance of NN using the log-Euclidean and the Stein metrics (\eg, from $45.4\%$ to $72.8\%$ in the case of the Stein divergence).

\def \YTSIZE {0.225}
\begin{figure}[!tb] 
  \centering 
  \includegraphics[width=\YTSIZE \columnwidth,keepaspectratio]{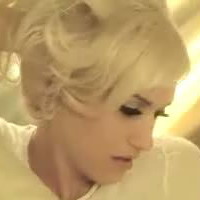}
  \hfill
  \includegraphics[width=\YTSIZE \columnwidth,keepaspectratio]{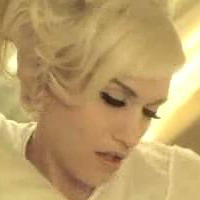}
  \hfill
  \includegraphics[width=\YTSIZE \columnwidth,keepaspectratio]{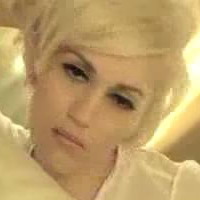}
  \hfill
  \includegraphics[width=\YTSIZE \columnwidth,keepaspectratio]{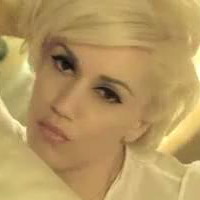}
  \caption
    {
    Samples from YouTube celebrity~\cite{YT_Celebrity}.% (in our experiments, we used grayscale images).
    }
  \label{fig:YT_Celebrity_example}
\end{figure}

\subsection{Video Clustering}
\label{subsec:clustering_results}

The unsupervised algorithm introduced in Section~\ref{sec:unsupervised_tl} allows us to obtain a low-dimensional SPD manifold 
from a high-dimensional one by maximizing a notion of data variance. We now evaluate the performance of 
this unsupervised DR approach on the task of video clustering. To this end, we report both the clustering accuracy and the Normalized Mutual Information (NMI)~\cite{Strehl:AAAI:2000}, which measures the amount of statistical information shared by random variables representing the cluster distribution and the underlying class distribution of the data points. Let $P_C$ be the random variable denoting the cluster assignments of the points
and $P_K$ the random variable denoting the underlying class labels on the points. Then, the NMI is defined as
\begin{equation}
NMI = 2\frac{I(P_C;P_K)}{H(P_C) + H(P_K)}\;,
\label{eqn:NMI}
\end{equation}
where $I(P_X;P_Y) = H(P_X) - H(P_X|P_Y)$ is the mutual information between the random variables $P_X$ and $P_Y$,
$H(P_X)$ is the Shannon entropy of $P_X$, and $H(P_X|P_Y)$ is the conditional entropy of $P_X$ given $P_Y$. The normalization by the average entropy of $P_C$ and $P_K$ makes the NMI be between 0 and 1.
%\textcolor{red}{
For measuring the clustering accuracy, we followed the metric described in~\cite{Cai_TKDE_2005}. 
More specifically, for a query sample $\Mat{X}_i$, let $r_i$ and $s_i$ be the obtained cluster label and the ground truth label, respectively. The accuracy (AC) is defined as follows:
\begin{equation*}
\mathrm{AC} = \frac{1}{n}\sum_{i = 1}^{n} g(s_i,\mathrm{map}(r_i))\;.
\end{equation*}
where $n$ is the total number of queries, $g(x,y)$ is equal to one if
$x = y$ and zero otherwise, and $\mathrm{map}(r_i)$ is the permutation mapping function that
maps each cluster label $r_i$ to the equivalent label from the ground truth. The best mapping
can be found by using the Kuhn-Munkres algorithm~\cite{Lovasz_2009}.
%}

For the task of clustering, we used the static setting of the UMD Keck body-gesture data set~\cite{KECK_Dataset},
which consists of 126 videos of 14 naval body gestures. Samples are shown in Fig.~\ref{fig:Keck_samples}.
We described each video in a similar manner as in the YTC experiment, albeit with a couple of differences.
More specifically, we used Histograms of Gradients (HoG)~\cite{Dalal_CVPR_2005} instead of LBP histograms to describe each frame.
Furthermore, each frame was resized to $32 \times 32$, and we concatenated HoG features extracted from $16 \times 16$ non-overlapped regions 
to form the frame descriptor. Using the idea of Eq.~\eqref{eqn:rcm_ytc} to aggregate the frame descriptors, we obtained an SPD matrices of size $125 \times 125$ to describe each video. 

For our evaluation, we employed the k-means algorithm on the manifold using the AIRM and the Jeffrey and the Stein divergences. 
We also made use of the k-means algorithm on the identity tangent space for the log-Euclidean metric. 
In addition to k-means on the manifold, we also utilized the kernel k-means algorithm using the Jeffrey, Stein and log-Euclidean kernels.
We refer to the algorithms evaluated in our experiments as:

\begin{itemize}
	\renewcommand{\labelitemi}{\scriptsize$$}
	\item \textbf{KM-AIRM:} k-means based on the AIRM on the high-dimensional SPD manifold.
	\item \textbf{KM-S:} k-means based on the $S$ divergence on the high-dimensional SPD manifold.	
	\item \textbf{KM-J:} k-means based on the $J$ divergence on the high-dimensional SPD manifold.
	\item \textbf{KM-lE:} k-means based on the log-Euclidean metric on the high-dimensional SPD manifold.
	\item \textbf{KM-AIRM-DR:} k-means based on the AIRM on the low-dimensional SPD manifold obtained with our approach.	
	\item \textbf{KM-S-DR:} k-means based on the $S$ divergence on the low-dimensional SPD manifold obtained with our approach.
	\item \textbf{KM-J-DR:} k-means based on the $J$ divergence on the low-dimensional SPD manifold obtained with our approach.
	\item \textbf{KM-lE-DR:} k-means based on the log-Euclidean metric on the low-dimensional SPD manifold obtained with our approach.
	\item \textbf{kKM-S:} kernel k-means based on the $S$ divergence on the high-dimensional SPD manifold.
	\item \textbf{kKM-lE:} kernel k-means based on the log-Euclidean metric on the high-dimensional SPD manifold.
	\item \textbf{kKM-S-DR:} kernel k-means based on the $S$ divergence on the low-dimensional SPD manifold obtained with our approach.
	\item \textbf{kKM-lE-DR:} kernel k-means based on the log-Euclidean metric on the low-dimensional SPD manifold obtained with our approach.
\end{itemize}

\def \KECK_SIZE {0.30}
\begin{figure}[!tb]
	\centering
  	\includegraphics[width = \KECK_SIZE \columnwidth,keepaspectratio]{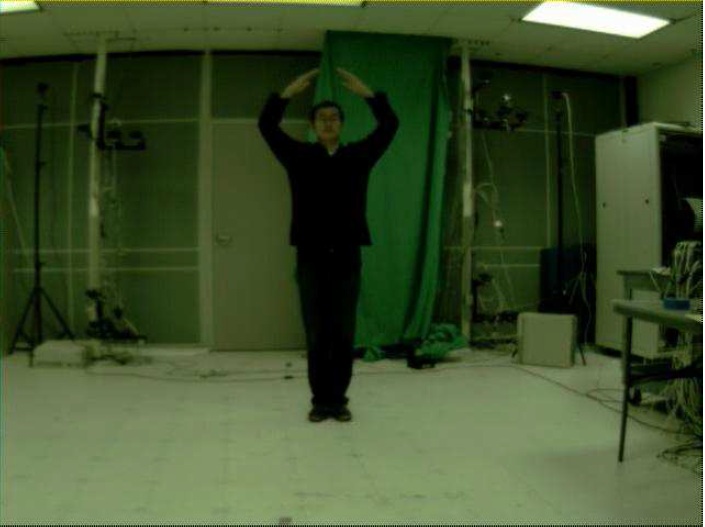}
   	\includegraphics[width = \KECK_SIZE \columnwidth,keepaspectratio]{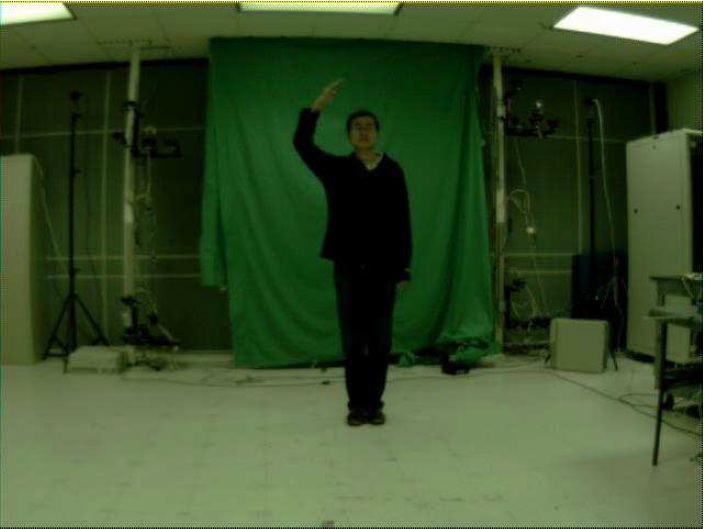}
   	\includegraphics[width = \KECK_SIZE \columnwidth,keepaspectratio]{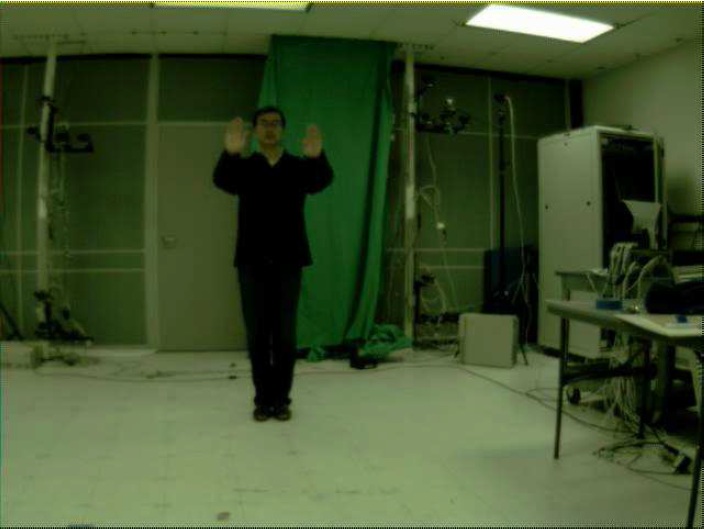}
    \caption{{\small Sample images from the UMD Keck body-gesture dataset~\cite{KECK_Dataset}.}}
	\label{fig:Keck_samples}	
\end{figure}

Table~\ref{tab:table_KECK_performance} reports the accuracy and NMI values for all the studied methods. 
It is interesting to see that the log-Euclidean metric achieves better accuracy on this dataset. We also note that the AIRM  
outperforms the solutions based on the Bregman divergences. However, with kernel k-means,  the Stein-based algorithm
surpasses the AIRM-based one.

\begin{table}
	\centering
   	\begin{tabular}{lcc}
   		\toprule
    	{\bf Method} &{\bf AC }	&{\bf NMI }\\
   		\toprule  		
   		{\bf AIRM}                         	 				 &$56.2\% \pm 1.2$			&$73.0\% \pm 1.8$\\
    	{\bf AIRM-DR}                           			 &$64.7\% \pm 2.0$			&$79.0\% \pm 1.7$\\
   		\midrule
   		{\bf KM-S}                        	 				 &$53.9\% \pm 1.4$			&$71.0\% \pm 1.1$\\
    	{\bf KM-S-DR}	                          			 &$61.4\% \pm 1.2$			&$78.7\% \pm 0.9$\\
   		{\bf kKM-S}		                        	 		 &$64.1\% \pm 1.8$			&$77.5\% \pm 0.1$\\
    	{\bf kKM-S-DR}	            		              	 &$71.2\% \pm 1.4$			&$83.7\% \pm 0.4$\\   		
    	\midrule
		{\bf KM-J}              	          	 			 &$53.8\% \pm 1.7$			&$71.2\% \pm 1.0$\\
    	{\bf KM-J-DR}	    	                      		 &$55.3\% \pm 2.1$			&$72.8\% \pm 0.9$\\
    	\midrule
    	{\bf KM-lE}                        	 				 &$62.7\% \pm 0.9$			&$79.2\% \pm 0.3$\\
    	{\bf KM-lE-DR}	                       			 	 &$75.3\% \pm 1.5$			&$88.3\% \pm 0.2$\\
   		{\bf kKM-lE}                        	 		 	 &$71.3\% \pm 1.7$			&$83.5\% \pm 0.2$\\
    	{\bf kKM-lE-DR}		                          	 	 &$83.2\% \pm 0.2$			&$91.8\% \pm 0.2$\\   		
   		\bottomrule	
	    \end{tabular}    	
   		\caption   {Recognition accuracies and normalized mutual information scores (mean and standard deviations) 
   		for the Keck dataset~\cite{KECK_Dataset}.}	
   		\label{tab:table_KECK_performance} 
\end{table}
\section{Conclusions and Future Work}
\label{sec:conclusion}

We have introduced an approach to mapping a high-dimensional SPD manifold into a lower-dimensional one. In particular, we have derived both a supervised and an unsupervised formulation. In both cases, we have studied different metrics to encode the similarity between SPD matrices, namely, the AIRM, the Stein divergence, the Jeffrey divergence and the log-Euclidean metric. Our experiments have shown that reducing the dimensionality consistently improved accuracy over directly using the high-dimensional SPD matrices. In particular, we have found that the Stein divergence was particularly powerful in the supervised case, while the log-Euclidean metric was highly competitive in the unsupervised one. We believe that this work, extended our preliminary study~\cite{Harandi_ECCV_2014}, which already generated follow-ups~\cite{Huanglog_ICML_2015,Huang_2015_CVPR,Yger_Arxiv_2015}, is an important step towards developing DR algorithms dedicated to Riemannian manifolds, and in particular in the context of going from a high-dimensional manifold to a lower-dimensional one. In the future, we therefore intend to extend this framework to other types of Riemannian manifolds.

% use section* for acknowledgement

\appendix
\section{The equivalency between the length of curves under AIRM and the J-divergence}%~\ref{thm:curve_equiv_jeff_thm}}
\label{app:details}

Here, we prove 
%Theorem~\ref{thm:curve_equiv_jeff_thm} (from Section~\ref{sec:background}),\ie, 
the equivalency between the length of any given curve under $\delta_R^2$ and $\delta_j^2$ up to scale of $\sqrt{2}$.
%To this end, we make use of the following theorem.	
The proof of this theorem is developed in several steps.
We start with the definition of curve length and intrinsic metric.
Without any assumption on differentiability, let $(M,d)$ be a metric space.  
A curve in $M$ is a continuous function $\gamma : [0, 1] \rightarrow M$
and joins the starting point $\gamma(0) = x$ to the end point $\gamma(1) = y$.
Let us define the following:

\begin{definition} \label{def:curve_length}
	The length of a curve $\gamma$ is the supremum of $\ell(\gamma ; \{t_i \})$ over all possible partitions \mbox{$\{t_i \}$}
	with \mbox{{$\{t_i \}$}} satisfying \mbox{$0 = t_0 < t_1 < \cdots < t_{n-1} < t_n = 1$} and
	\mbox{{$\ell(\gamma ; \{t_i \}) = \sum_{i}d\left(\gamma(t_i),\gamma(t_{i-1})\right)$}}.
\end{definition}

\begin{definition} \label{def:intrinsic_metric_thm}
	The intrinsic metric $\widehat{\delta}$
	is defined as the infimum of the length of all paths from $x$ to $y$.
\end{definition}
	
\begin{theorem}
	If the intrinsic metrics induced by two metrics $d_1$ and $d_2$ are identical to scale $\xi$,
	then the length of any given curve is the same under both metrics up to $\xi$~\cite{Hartley_IJCV_13}.
\end{theorem}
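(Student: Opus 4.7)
The plan is to reduce the claim to the following lemma: for any metric $d$ on $M$ with induced intrinsic metric $\widehat{d}$, and any curve $\gamma : [0,1] \to M$, the curve length is unchanged upon replacing $d$ by its intrinsic counterpart, i.e. $L_d(\gamma) = L_{\widehat{d}}(\gamma)$. Granting this lemma, the theorem drops out immediately: if $\widehat{d}_1 = \xi\,\widehat{d}_2$ pointwise, then for every partition $\{t_i\}$ one has $\sum_i \widehat{d}_1(\gamma(t_i),\gamma(t_{i-1})) = \xi \sum_i \widehat{d}_2(\gamma(t_i),\gamma(t_{i-1}))$, whence $L_{\widehat{d}_1}(\gamma) = \xi\, L_{\widehat{d}_2}(\gamma)$; composing with two applications of the lemma (one for each index) gives $L_{d_1}(\gamma) = \xi\, L_{d_2}(\gamma)$.

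The core work, therefore, is the lemma, which I would prove by two opposing inequalities. For $L_{\widehat{d}}(\gamma) \leq L_d(\gamma)$, fix a partition $0 = t_0 < t_1 < \cdots < t_n = 1$. Each restriction $\gamma|_{[t_{i-1},t_i]}$ is itself a curve joining $\gamma(t_{i-1})$ to $\gamma(t_i)$, so by the infimum-of-lengths definition of the intrinsic metric, $\widehat{d}(\gamma(t_{i-1}),\gamma(t_i)) \leq L_d(\gamma|_{[t_{i-1},t_i]})$. Summing over $i$ and using additivity of $L_d$ under concatenation of consecutive sub-arcs yields $\sum_i \widehat{d}(\gamma(t_i),\gamma(t_{i-1})) \leq L_d(\gamma)$; taking the supremum over all partitions gives the desired inequality.

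For the reverse inequality $L_d(\gamma) \leq L_{\widehat{d}}(\gamma)$, I would first establish the pointwise bound $d(x,y) \leq \widehat{d}(x,y)$: for any curve $\sigma$ joining $x$ and $y$ and any partition of its parameter interval, iterated application of the triangle inequality on $d$ gives $d(x,y) \leq \sum_i d(\sigma(t_i),\sigma(t_{i-1}))$, hence $d(x,y) \leq L_d(\sigma)$; taking the infimum over such $\sigma$ yields $d \leq \widehat{d}$. With this in hand, every partition sum for $d$ along $\gamma$ is dominated termwise by the corresponding sum for $\widehat{d}$, and passing to suprema finishes the lemma.

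The main obstacle I foresee is not any single hard estimate but rather the care needed in handling the two "length-under-length" manipulations, namely additivity of $L_d$ over a partition into sub-arcs and the inequality $d \leq \widehat{d}$; both are standard in metric geometry but must be argued directly since $d$ is only assumed to be a metric and is not a priori a length metric. If some curve has infinite $L_d$, the inequalities continue to hold in the extended sense, so no separate rectifiability hypothesis is required; in particular, applying the result with $d_1 = \delta_R$ and $d_2 = \delta_J$ (after verifying $\widehat{\delta}_R = \sqrt{2}\,\widehat{\delta}_J$, which is what must still be established separately) will yield $L_R = \sqrt{2}\, L_J$ as claimed.
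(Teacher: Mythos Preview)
Your argument is correct. The key lemma $L_d(\gamma)=L_{\widehat d}(\gamma)$ is established cleanly: the inequality $L_{\widehat d}\le L_d$ follows because each sub-arc is an admissible path in the infimum defining $\widehat d$, together with additivity of $L_d$ over concatenation; the reverse inequality follows from the pointwise bound $d\le\widehat d$, itself a consequence of the triangle inequality. Once the lemma is in hand, the scaling $\widehat d_1=\xi\,\widehat d_2$ transfers immediately to lengths.

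By way of comparison, the paper does not actually prove this theorem: it is quoted verbatim from~\cite{Hartley_IJCV_13} and used as a black box, with the paper's own effort going entirely into verifying the limit condition of the \emph{next} theorem (so as to conclude $\widehat\delta_R=\sqrt{2}\,\widehat\delta_J$). Your write-up therefore supplies a self-contained justification that the paper omits. The only cosmetic remark is that you might state explicitly that additivity of $L_d$ over a finite partition into consecutive sub-arcs uses the fact that refining a partition can only increase the associated sum (triangle inequality), since you flag this as one of the two ``length-under-length'' manipulations needing care.
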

\begin{theorem}
	If $d_1(x,y)$ and $d_2(x,y)$ are two metrics defined on a space $M$ such that
	\begin{equation}
		\lim_{d_1(x,y) \rightarrow 0} \:\frac{d_2(x,y)}{d_1(x,y)} = 1
		\label{eqn:intrinsic_metric0}
	\end{equation}
	uniformly (with respect to $x$ and $y$), then their intrinsic metrics are identical~\cite{Hartley_IJCV_13}.
\end{theorem}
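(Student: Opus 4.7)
The plan is to fix an arbitrary continuous curve $\gamma:[0,1]\to M$ and show that its lengths $L_1(\gamma)$ and $L_2(\gamma)$ under $d_1$ and $d_2$ coincide; taking infima over all curves joining $x$ to $y$ then yields $\widehat{\delta}_1(x,y)=\widehat{\delta}_2(x,y)$ by Definition~\ref{def:intrinsic_metric_thm}.

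First, I would unpack the uniform convergence hypothesis~\eqref{eqn:intrinsic_metric0}: given $\epsilon>0$, there exists $\delta>0$, independent of $x$ and $y$, such that whenever $d_1(x,y)<\delta$,
\[
(1-\epsilon)\,d_1(x,y) \;\le\; d_2(x,y) \;\le\; (1+\epsilon)\,d_1(x,y).
\]
Thus on pairs of points close in $d_1$, the two metrics agree up to a multiplicative $(1\pm\epsilon)$ factor. I would also observe that this two-sided bound forces the $d_1$- and $d_2$-topologies to agree on small scales, so that continuity of $\gamma$ gives $d_1(\gamma(s),\gamma(t))\to 0$ and $d_2(\gamma(s),\gamma(t))\to 0$ as $s\to t$.

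Second, by uniform continuity of $\gamma$ on the compact interval $[0,1]$ in the $d_1$-metric, any partition can be refined to one with $d_1(\gamma(t_i),\gamma(t_{i-1}))<\delta$ for every $i$; call these partitions \emph{$\delta$-fine}. For any $\delta$-fine partition $\{t_i\}$, summing the two-sided bound gives
\[
(1-\epsilon)\,\ell_1(\gamma;\{t_i\}) \;\le\; \ell_2(\gamma;\{t_i\}) \;\le\; (1+\epsilon)\,\ell_1(\gamma;\{t_i\}).
\]
The crucial structural observation is that by the triangle inequality, inserting a new subdivision point can only increase a partition sum under any metric; hence refinement is monotone and the supremum in Definition~\ref{def:curve_length} may be computed along any cofinal collection of partitions. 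Since every partition admits a $\delta$-fine refinement, the $\delta$-fine partitions are cofinal, and the suprema $L_1(\gamma),L_2(\gamma)$ coincide with the suprema over just the $\delta$-fine partitions. Passing to the supremum in the displayed inequality above yields $(1-\epsilon)\,L_1(\gamma)\le L_2(\gamma)\le (1+\epsilon)\,L_1(\gamma)$, and letting $\epsilon\to 0$ gives $L_1(\gamma)=L_2(\gamma)$. The case $L_1(\gamma)=\infty$ is subsumed: the two-sided bound forces $L_2(\gamma)=\infty$ as well, because along any sequence of refinements for which $\ell_1$ diverges, $\ell_2\ge(1-\epsilon)\ell_1$ diverges too.

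Third, once curve lengths agree for every continuous curve in $M$, Definition~\ref{def:intrinsic_metric_thm} gives $\widehat{\delta}_1(x,y)=\inf_\gamma L_1(\gamma)=\inf_\gamma L_2(\gamma)=\widehat{\delta}_2(x,y)$, completing the argument.

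The main obstacle I anticipate is precisely the sup/refinement step: the inequality $(1-\epsilon)\,\ell_1\le\ell_2\le(1+\epsilon)\,\ell_1$ only holds on $\delta$-fine partitions, so one must argue carefully that the full supremum defining the length is realized along this restricted, cofinal family. The lever that makes this work is monotonicity of partition sums under refinement together with the existence, via uniform continuity of $\gamma$, of $\delta$-fine refinements of any given partition. A secondary technical nuisance is that the hypothesis~\eqref{eqn:intrinsic_metric0} is stated as a limit as $d_1(x,y)\to 0$, so one must be slightly careful to read the uniformity as producing a single $\delta$ valid for all pairs $(x,y)$ with $d_1(x,y)$ below that threshold, which is exactly what the second step requires.
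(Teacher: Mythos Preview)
The paper does not prove this theorem at all; it merely quotes it with a citation to~\cite{Hartley_IJCV_13} and then \emph{applies} it by computing the limit $\lim \delta_R^2/\delta_J^2$ in the subsequent proof block. So there is no ``paper's own proof'' to compare against here.

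That said, your argument is the standard one and is essentially correct. The key steps---uniform continuity of $\gamma$ on the compact interval to produce $\delta$-fine partitions, monotonicity of partition sums under refinement via the triangle inequality, and the resulting cofinality of $\delta$-fine partitions in the supremum defining length---are all sound and assembled in the right order. One small caution: your remark that the $d_1$- and $d_2$-topologies must agree does not follow directly from the hypothesis as written, since \eqref{eqn:intrinsic_metric0} is one-sided (it controls $d_2$ in terms of $d_1$ only when $d_1$ is small, not conversely). For the length argument itself you only need $\gamma$ to be continuous in $d_1$, which is what you actually use; so either state that assumption explicitly, or note that in the intended application on $\SPD{n}$ both metrics induce the manifold topology anyway.
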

	
Therefore, we need to study the behavior of 
\begin{align*}
	&\lim_{\delta_J^2(\Mat{X},\Mat{Y}) \rightarrow 0} \:\frac{\delta_R^2(\Mat{X},\Mat{Y})}{\delta_J^2(\Mat{X},\Mat{Y})}\;,
\end{align*}
to prove our theorem on curve length.

\begin{proof}
	We first note that for an affine invariant metric $\delta$ on $\SPD{n}$,
	$\delta(\Mat{X},\Mat{Y}) = \delta(\mathbf{I}_n,\Mat{D}^{-1/2}\Mat{L}^T\Mat{Y}\Mat{L}\Mat{D}^{-1/2})$, 
	where $\Mat{X} = \Mat{L}\Mat{D}\Mat{L}^T$ and $\Mat{L}\Mat{L}^T = \mathbf{I}_n$.
	As a result, we just 
	need to study the behavior of our metrics around $\mathbf{I}_n$ to draw any conclusion. The behavior of a point close to 
	$\mathbf{I}_n$ for an affine invariant metric can be 
	described by a diagonal matrix in the form $\DIAG(\exp (\nu_i))$. This can be understood by considering	
	the exponential map of a tangent vector $\Mat{U}\DIAG(\nu_i)\Mat{U}^T$
	at the identity tangent space and noting that
	$\delta(\mathbf{I}_n,\Mat{U}\DIAG(\exp (\nu_i))\Mat{U}^T) = \delta(\mathbf{I}_n,\DIAG(\exp (\nu_i))),
	~~\forall~\Mat{U}:\Mat{U}\Mat{U}^T = \Mat{U}^T\Mat{U} =\mathbf{I}_n$. For the $J$-divergence, we have
  
  \noindent
	\begin{align}
		\lim_{\Mat{X} \rightarrow \Mat{Y}} \:\frac{\delta_R^2 (\Mat{X},\Mat{Y})}{\delta_J^2(\Mat{X},\Mat{Y})} &= 
		\lim_{t \rightarrow 0} \:\frac{\delta_R^2 \Big(\mathbf{I}_n,\DIAG\big(\exp (t\nu_i)\big)\Big)}
		{\delta_J^2\Big(\mathbf{I}_n,\DIAG\big(\exp (t\nu_i)\big)\Big)}
		\notag \\
		&\hspace{-15ex}= \lim_{t \rightarrow 0} \:\frac
		{2\Big\| \log \Big( \DIAG\big( \exp(t \nu_i)\big) \Big)\Big\|_F^2}
		{\tr \Big \{\DIAG\big( \exp(t \nu_i)\big)
		+ \DIAG\big( \exp(-t \nu_i)\big) \Big\} -2n  }
		\notag \\
		&\hspace{-15ex}= \lim_{t \rightarrow 0} \:\frac
		{2t^2\sum_{i = 1}^n\nolimits \nu_i^2}
		{\sum_{i = 1}^n\nolimits \exp(t \nu_i)+\sum_{i = 1}^n\nolimits\exp(-t \nu_i) -2n}
		\label{eqn:proof_jeff_airm_1} \\
		&\hspace{-15ex}= \lim_{t \rightarrow 0} \:\frac
		{4\sum_{i = 1}^n\nolimits{\nu_i^2}}
		{\sum_{i = 1}^n\nolimits \nu_i^2\exp(t \nu_i) + \sum_{i = 1}^n\limits \nu_i^2\exp(-t \nu_i)}
		= 2
		\label{eqn:proof_jeff_airm_2}
	\end{align}
	where L'H\^{o}pital's rule was used twice from \eqref{eqn:proof_jeff_airm_1} to \eqref{eqn:proof_jeff_airm_2} since 
	the limit in \eqref{eqn:proof_jeff_airm_1} was indefinite. 	
	Therefore,  $\lim_{\Mat{X} \rightarrow \Mat{Y}} \:\frac{\delta_R (\Mat{X},\Mat{Y})}{\delta_J (\Mat{X},\Mat{Y})} = \sqrt{2}$,
	which concludes the proof.	
\end{proof}

\balance
\bibliography{references}

\end{document}